\documentclass{article}


\usepackage[nonatbib, final]{neurips_2021}



\usepackage[ruled]{algorithm2e}
\usepackage{graphicx}
\usepackage{caption}
\usepackage[utf8]{inputenc} 
\usepackage[T1]{fontenc}    
\usepackage{hyperref}       
\usepackage{url}            
\usepackage{booktabs}       
\usepackage{amsfonts}       
\usepackage{nicefrac}       
\usepackage{microtype}      
\usepackage{xcolor}         
\usepackage{enumitem}
\usepackage{amsfonts}
\usepackage{amsmath}
\usepackage{amssymb}
\usepackage{amstext}
\usepackage{amsthm}
\usepackage{tikz}
\usepackage{pgfplots}

\usepackage{url, hyperref}
\usepackage{amsthm, amsmath, amssymb} 
\usepackage{color}
\usepackage{multirow}
\usepackage{xspace}
\usepackage{rotating}
\usepackage{graphicx}
\usepackage{caption}
\usepackage{xcolor}
\usepackage[normalem]{ulem}
\usepackage{tikz}


\usepackage{nicefrac}
\usepackage{mathtools}


\theoremstyle{plain}
\newtheorem{theorem}{\textbf{Theorem}}
\newtheorem{lemma}[theorem]{\textbf{Lemma}}

\newtheorem{proposition}[theorem]{Proposition}

\theoremstyle{definition}

\newtheorem{assumption}{Assumption}

\renewcommand{\cite}{\citep}



\newcommand{\para}[1]{\noindent \textbf{#1}~}


\newcommand{\EE}{\mathbb{E}}

\DeclareMathOperator*{\argmin}{arg\,min}
\DeclareMathOperator*{\argmax}{arg\,max}

\newcommand{\Fcal}{\mathcal{F}}

\newcommand{\Gcal}{\mathcal{G}}

\newcommand{\Scal}{\mathcal{S}}

\newcommand{\Acal}{\mathcal{A}}
\newcommand{\Tcal}{\mathcal{T}}

\newcommand{\Ecal}{\mathcal{E}}

\newcommand{\Rmax}{R_{\max}}

\newcommand{\Vmax}{V_{\max}}
\newcommand{\RR}{\mathbb{R}}

\newcommand{\emp}[1]{\widehat{#1}}

\newcommand{\epsF}{\epsilon_{\text{approx}}}
\newcommand{\epsphi}{\epsilon_\phi}

\newcommand{\epsdct}{\epsilon_{\text{dct}}}
\newcommand{\epst}{\tilde{\epsilon}}
\newcommand{\epsd}{\epsilon_1}


\newcommand{\eTG}{\emp{\Tcal}_{\Gcal}^\pi}

\newcommand{\CS}{C_{\Scal}}
\newcommand{\CA}{C_{\Acal}}
\newcommand{\ts}{\tilde{s}}
\newcommand{\ta}{\tilde{a}}

\newcommand{\fQ}{\hat{Q}}

\newcommand{\fo}{f_0}

\newcommand{\bvftloss}{\texttt{BVFT-loss}\xspace}
\newcommand{\bvftpeloss}{\texttt{BVFT-PE-loss}\xspace}
\newcommand{\bvft}{\texttt{BVFT}\xspace}
\newcommand{\bvftpe}{\texttt{BVFT-PE}\xspace}
\newcommand{\bvftpeq}{\texttt{BVFT-PE-Q}\xspace}

\usepackage[style=alphabetic,backend=bibtex,maxcitenames=6,maxalphanames=6,maxbibnames=99]{biblatex}
\addbibresource{thesisrefs.bib}
\addbibresource{bvft.bib}

\title{Towards Hyperparameter-free Policy Selection\\ for Offline Reinforcement Learning}

\author{%
Siyuan Zhang\\
Computer Science\\
University of Illinois at Urbana-Champaign\\
\texttt{siyuan3@illinois.edu} \\
\And
Nan Jiang \\
Computer Science\\
University of Illinois at Urbana-Champaign \\
\texttt{nanjiang@illinois.edu}
}

\begin{document}

\maketitle

\begin{abstract}
How to select between policies and value functions produced by different training algorithms in offline reinforcement learning (RL)---which is crucial for hyperparameter tuning---is an important open question. Existing approaches based on off-policy evaluation (OPE) often 
require additional function approximation and hence hyperparameters, creating a chicken-and-egg situation.  In this paper, we design  hyperparameter-free algorithms for policy selection based on \bvft \cite{xie2020batch}, a recent theoretical advance in value-function selection,  
and demonstrate their effectiveness in discrete-action benchmarks such as Atari. 
To address performance degradation due to poor critics in continuous-action domains, we further combine \bvft with OPE to get the best of both worlds, and obtain a hyperparameter-tuning method  for $Q$-function based OPE with theoretical guarantees  as a side product. 
\end{abstract}

\section{Introduction and Related Works}
\label{sec:intro}
Learning a good policy from historical data without interactive access to the actual environment, or offline (batch) reinforcement learning (RL), is a promising approach to applying RL to real-world scenarios when high-fidelity simulators are  not available \cite{levine2020offline}. Despite the fast development in the training algorithms, a burning  question that remains wide open is how to tune their hyperparameters, sometimes known as the offline policy selection problem \cite{paine2020hyperparameter, yang2020offline, fu2021benchmarks}. 

Standard approaches reduce the problem to off-policy evaluation (OPE), which estimates the expected return of the candidate policies  and choose accordingly. Unfortunately, OPE itself is a difficult problem, and standard estimators such as importance sampling suffer exponential (in horizon) variance \cite{li2015minimax, jiang2016doubly}. While polynomial-variance estimators exist, either using TD (e.g., Fitted-Q Evaluation, or FQE \cite{le2019batch}) or marginalized importance sampling \cite{liu2018breaking, nachum2019dualdice, uehara2020minimax}, they require  additional function approximation, inducing yet another set of hyperparameters (e.g., the neural-net architecture) which need to be carefully chosen. \cite{paine2020hyperparameter} recently conclude that FQE can be effective for offline policy selection, but ``an important remaining challenge is how to choose hyperparameters for FQE''. (Incidentally, we are able to address this question as a side product of our approach in Section~\ref{sec:bvftpe}.) In other words, to tune hyperparameters for training we need to tune the hyperparameters for OPE, creating a chicken-and-egg situation. To this end, we want to ask: 
\begin{center}
\textbf{Can we design effective \textit{hyperparameter-free} methods for offline policy selection?}
\end{center}
The question has been investigated in the theoretical literature \cite{farahmand2011model}, mostly reformulated so that we select indirectly among value functions instead of policies to trade-off directness for tractability. More precisely, \cite{farahmand2011model} imagines that training algorithms produce candidate $Q$-functions $Q_1, Q_2, \ldots, Q_m$, which is a reasonable assumption as most offline algorithms produce value functions as a side product. The goal is to select $Q_i \approx Q^\star$---assuming one exists---so that the induced greedy policy, $\pi_{Q_i}$, is near-optimal. While $\|Q-Q^\star\|$ is only a surrogate for the performance of $\pi_Q$, the hope is that whether $Q \approx Q^\star$ can be more easily verified from holdout data without additional function approximation, possibly by estimating the Bellman error (or residual) $\|Q-  \Tcal Q\|$. 
Unfortunately, $\|Q - \Tcal Q\|$ is not amendable to statistical estimation in stochastic environments \cite{sutton2018reinforcement}. The na\"ive estimator which squares the TD error (see ``1-sample BR'' in Proposition~\ref{prop:1sample}) suffers the infamous \textit{double-sampling bias} \cite{baird1995residual}, and debiasing approaches demand additional function approximation (and hence hyperparameters) \cite{antos2008learning, farahmand2011model}. In  prototypical real-world applications, hyperparameter-free heuristics such as picking the highest $Q$  \cite{garg2020batch} are often used despite the lack of theoretical guarantees, which we will compare to in our experiments. 

In this paper, we attack the problem based on a recent theoretical breakthrough in value-function selection: \cite{xie2020batch} propose a theoretical algorithm, \bvft, which provides a workaround to the double sampling issue without requiring additional function approximation; they estimate a form of projected Bellman error as a surrogate for $\|Q-Q^\star\|$, where the function class for projection is created out of the candidate $Q$'s themselves. See Section~\ref{sec:bvft_background} for details. Our contributions are 2-fold: \vspace{-.5em}
\begin{enumerate}[leftmargin=*, itemsep=0pt, labelindent=0pt]
\item We design a practical implementation of \bvft based on novel theoretical observations, removing its last hyperparameter which determines a discretization resolution. We empirically demonstrate that \bvft enjoys promising performance in discrete-action benchmarks such as Atari games, sometimes using 20x less data than required by FQE-based policy selection. 
\item The vanilla \bvft suffers performance degradation in continuous-action benchmarks, where the training algorithms often have an actor-critic structure and output a $(\pi, Q)$ pair where $Q$ is far away from $Q^\star$ for various reasons. 
To address this challenge, we propose \bvftpe, a variant of \bvft that allows us to select among $(\pi, Q)$ pairs and pick one where $Q \approx Q^\pi$ and $\pi$ yields a high return. To further handle the issue that $Q$ from the critic is often a poor fit of $Q^\pi$, we propose to use \textit{multiple} OPE algorithms to re-fit $Q^\pi$, and run \bvftpe among the produced $(\pi, Q)$ pairs. While the OPE algorithms often have many hyperparameters that need to be set, \bvftpe automatically chooses between them, leaving very few to no hyperparameters untunable. This allows us to combine the strengths of OPE and \bvft and get the best of both worlds. We also show additional results that \bvftpe can be used for hyperparamter tuning in $Q$-function-based OPE and provide theoretical guarantees, which is of independent interest. 
\end{enumerate}

\section{Preliminaries}
\label{sec:back}
\para{Markov Decision Processes (MDPs)}
In RL, we often model the environment as an MDP, specified by its state space $\Scal$, action space $\Acal$, reward function $R: \Scal\times\Acal\to[0, \Rmax]$, transition function $P: \Scal \times\Acal\to\Delta(\Scal)$ ($\Delta(\cdot)$ is the probability simplex), discount factor $\gamma \in [0, 1)$, and a initial state distribution $d_0$. 
We assume $\Scal\times\Acal$ is finite but can be arbitrarily large. 
A deterministic policy $\pi: \Scal\to \Acal$ 
induces a random trajectory $s_0, a_0, r_0, s_1, a_1, r_1, \ldots$ where $s_0 \sim d_0, a_t = \pi(s_t)$, $r_t = R(s_t, a_t)$, and $s_{t+1} \sim P(\cdot|s_t, a_t)$, $\forall t$. 
We measure the performance of $\pi$ using 
$J(\pi):= \EE[\sum_{t=0}^\infty \gamma^t r_t | \pi]$. 
In discounted MDPs, there always exists an optimal policy $\pi^\star$ that maximizes $J(\cdot)$ for all starting states. It is the greedy policy of the optimal $Q$-function, $Q^\star$, i.e., $\pi^\star = \pi_{Q^\star} := (s \mapsto \argmax_{a} Q^\star(s,a))$. $Q^\star$ is the fixed point of Bellman optimality equation, $Q^\star = \Tcal Q^\star$, where $\forall f \in \RR^{\Scal\times\Acal}$, $(\Tcal f)(s,a) := R(s,a) + \gamma \mathbb{E}_{s'\sim P(\cdot | s,a)}[\max_{a'} f(s', a')]$. A related important concept is 
$Q^\pi$,  
which tells us the expected return of $\pi$ when the trajectory starts from a specific state-action pair. 

\para{Offline Data} In offline RL, we are given a dataset of $(s,a,r,s')$ tuples and cannot directly interact with the MDP. For the theoretical part of the paper, we assume the standard \textit{offline sampling protocol}, that the tuples are generated i.i.d.~as $(s,a) \sim \mu$, $r = R(s,a)$, $s' \sim P(\cdot|s,a)$. 
With a slight abuse of notation, we also use $\EE_{\mu}[\cdot]$ to denote the expectation over $(s,a,r,s')$ sampled as above. 

\para{Policy Selection/Ranking} We will use the following unified framework for policy selection throughout the paper: Suppose training algorithms (or the same algorithm with different hyperparameters) produce multiple $(\pi, Q)$ pairs, $\{(\pi_i, Q_i)\}_{i=1}^m$, and our goal is to select a policy with good performance. The relationship between $\pi$ and $Q$ can differ in different contexts: for example, when training algorithms try to fit $Q^\star$ and induce a greedy policy, we have $\pi_i = \pi_{Q_i}$, and we only need to work with $\{Q_i\}_{i=1}^m$ as they contain all the relevant information. In the case where training algorithms have an actor-critic structure, $\pi$ and $Q$ are separate quantities and need to be reasoned about together. 
In real applications, the next step in the pipeline is to deploy the policy in the real system for online evaluation, and since we may have the resources to test more than 1 policy, we require all algorithms to produce a \textit{ranking} over $\{\pi_i\}_{i=1}^m$, often by sorting the policies in ascending order w.r.t.~a loss. 

\subsection{Experiment Setup}
\label{sec:exp_setup}
To avoid interrupting the flow of intertwined theoretical reasoning and empirical evaluation in the rest of the paper, we briefly describe our experiment setup here, with details deferred to Appendix~\ref{app:exp_setup}. 

\para{Environments and Datasets} 
We perform empirical evaluation on  OpenAI Gym \cite{openai}, Atari games \cite{bellemare2013arcade}, and Mujoco \cite{mujoco}. Taxi \cite{dietterich2000hierarchical} is used for sanity check. We use standard offline datasets when available (RLUnplugged \cite{gulcehre2021rl} for Atari, 
and D4RL \cite{fu2021d4rl} for MuJoCo), and generate our own otherwise by mixing a trained expert policy with 30\% chance of acting suboptimally.  
\cite{fu2021benchmarks} have  proposed a new benchmark for offline policy selection, which is also based on RLUnplugged/D4RL which we use and has only become available very recently. Also this benchmark (and \cite{voloshin2019empirical}) focuses on OPE and policy selection without value functions, which does not exactly fit our purposes. 
We leave the evaluation on their benchmarks to future work. 

\para{Training Algorithms}
We use several different training algorithms to generate the candidate models, including offline algorithms such as BCQ \cite{fujimoto2019benchmarking} and CQL \cite{kumar2020conservative}. To test the robustness of the policy-selection methods w.r.t.~how the candidate policies are generated, we also use algorithms that learn from online interactions such as DQN \cite{mnih2015human} 
in some domains, 
though policy selection is always performed using separate offline datasets. This scenario is also of interest in its own right, as one can imagine training on (possibly imperfect) simulators via online algorithms and using limited realworld offline data for policy selection. For each algorithm, we consider different neural architectures, learning rates, and learning steps as hyperparameters to produce multiple candidate policies (and value functions) for selection; see Table \ref{hyper_1} in Appendix~\ref{app:exp_setup} for details. 

\para{Performance Metrics}
In each experiment, we run different policy-selection methods and evaluate the produced policy rankings using the following two metrics adapted from \cite{yang2020offline}: \vskip 0em
\textit{\textbf{Top-$k$ normalized regret}}~ We take the best policy within the top-$k$ recommended by the ranking, and calculate its gap compared to the best policy among all candidates. This metric reflects our regret if we were to online-evaluate the top-$k$ policies in the ranking and identify the best among them. To make the value more interpretable, we normalize the regret by the gap between the best and the worst candidate policies. \\
\textit{\textbf{Top-$k$ precision}}~ We take the $k$ best policies in terms of their groudtruth values, and  return the proportion of them appearing in the top-$k$ policies in the ranking. \vskip 0em
This process is repeated for 200 or 300 runs, with randomness coming from re-sampling a subset of the dataset for policy selection (usually of size $50,000$; FQE needs much more data 
and we do not run it on random subsets) and sampling $m=10$ or $15$ policies from all candidates for comparison. All figures report the mean with error bars of twice the standard errors, i.e., $95\%$ confidence intervals.

  

\section{Background: Batch Value-Function Tournament (\bvft)}
\label{sec:bvft_background}
We briefly introduce the theoretical basis of our approach. As mentioned in Section~\ref{sec:intro},  the Bellman error $\|Q - \Tcal Q\|$ is an appealing quantity because $Q = Q^\star \Leftrightarrow \|Q - \Tcal Q\|_{\infty} = 0$, but it is not amendable to statistical estimation in stochastic environments due to the double-sampling bias \cite{baird1995residual, antos2008learning, farahmand2011model}. 
Given this caveat, a closely related quantity has been extensively studied in the literature: given function class $\Gcal \subset [0, \tfrac{\Rmax}{1-\gamma}]^{\Scal\times\Acal}$, the (mean-squared) \emph{projected Bellman error} of $Q$ w.r.t.~$\Gcal$ is defined as
\begin{align} \label{eq:mspbe}
\|Q - \Tcal_{\Gcal} Q\|_{2, \mu}^2, \textrm{~where~}  \Tcal_{\Gcal} Q := \argmin_{g\in\Gcal} \EE_{\mu}[(g(s,a) - r - \gamma \max_{a'} Q(s',a'))^2].
\end{align}
Here $\|\cdot\|_{2, \mu}^2 = \EE_\mu[(\cdot)^2]$, and the dependence of $\Tcal_\Gcal$ on $\mu$ is suppressed for readability. This quantity 
can be straightforwardly estimated from data when $\Gcal$ has bounded statistical complexity; we just need to replace all $\EE_{\mu}[\cdot]$ with their finite-sample approximation. 
The property of the projected Bellman error largely depends on the choice of $\Gcal$, which needs to satisfy certain conditions for $\|Q - \Tcal_\Gcal Q\|$ to be a good surrogate for $\|Q - Q^\star\|$. The seminal work of \cite{gordon1995stable} has provided the following sufficient condition: (see \cite[Proposition 4]{xie2020batch} for a formal proof) 
\begin{proposition} \label{prop:gordon}
If (1) $\Gcal$ is a piecewise-constant class, and (2) $Q^\star \in \Gcal$, then $\Tcal_\Gcal$ is $\gamma$-contraction under $\|\cdot\|_{\infty}$, and $Q=Q^\star \Leftrightarrow \|Q - \Tcal_\Gcal Q\|_{2, \mu} = 0$ if $\mu$ is fully supported on $\Scal\times\Acal$. 
\end{proposition}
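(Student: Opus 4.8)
The plan is to derive an explicit formula for the projection operator $\Tcal_\Gcal$ using the piecewise-constant structure, and then conclude via the Banach fixed-point theorem. Let $\{Z_1,\dots,Z_k\}$ be the fixed partition of $\Scal\times\Acal$ underlying $\Gcal$, so that $\Gcal$ is exactly the set of functions in $[0,\tfrac{\Rmax}{1-\gamma}]^{\Scal\times\Acal}$ that are constant on each $Z_j$ (full support of $\mu$ makes every $\mu(Z_j)>0$, so the quantities below are well-defined). The first and main step is to show that for every $Q$,
\[
(\Tcal_\Gcal Q)(s,a) \;=\; \EE_\mu\big[(\Tcal Q)(\ts,\ta)\cond (\ts,\ta)\in Z_j\big], \qquad \text{whenever } (s,a)\in Z_j .
\]
Indeed, the objective in \eqref{eq:mspbe} splits as $\sum_j \EE_\mu[\Indi\{(s,a)\in Z_j\}\,(g(s,a) - r - \gamma\max_{a'}Q(s',a'))^2]$, and the value $g$ takes on $Z_j$ enters only the $j$-th term, so the minimization decouples across cells; within $Z_j$ we choose a single constant, whose optimal value is the $\mu$-conditional mean of the regression target $r + \gamma\max_{a'}Q(s',a')$ on $Z_j$. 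By the tower rule---$\{(s,a)\in Z_j\}$ is $(s,a)$-measurable and $\EE_\mu[r + \gamma\max_{a'}Q(s',a')\mid s,a] = (\Tcal Q)(s,a)$---this conditional mean equals $\EE_\mu[(\Tcal Q)(s,a)\mid (s,a)\in Z_j]$. This is precisely the workaround to double sampling: averaging the noisy one-sample targets over an entire cell reconstructs an (averaged) Bellman backup, with no need for a second independent draw from a fixed $(s,a)$. Since $\Tcal$ maps $[0,\tfrac{\Rmax}{1-\gamma}]^{\Scal\times\Acal}$ into itself, each cell constant lands in $[0,\tfrac{\Rmax}{1-\gamma}]$ and the minimizer indeed belongs to $\Gcal$.

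Write $\Tcal_\Gcal = \Pi\circ\Tcal$, where $\Pi$ is the cell-wise $\mu$-averaging operator just identified. Next I would verify $\Pi$ is a non-expansion in $\|\cdot\|_\infty$: on $Z_j$, $(\Pi f)(s,a)$ is a convex combination (weights proportional to $\mu$) of the values $\{f(s',a') : (s',a')\in Z_j\}$, so $|(\Pi f - \Pi g)(s,a)|\le\max_{Z_j}|f-g|\le\|f-g\|_\infty$. Combined with the standard fact that the Bellman optimality operator $\Tcal$ is a $\gamma$-contraction under $\|\cdot\|_\infty$, this yields $\|\Tcal_\Gcal Q_1 - \Tcal_\Gcal Q_2\|_\infty \le \gamma\|Q_1 - Q_2\|_\infty$, i.e.\ $\Tcal_\Gcal$ is a $\gamma$-contraction.

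Finally, $\Tcal_\Gcal$ maps the complete (indeed finite-dimensional, since $\Scal\times\Acal$ is finite) metric space $[0,\tfrac{\Rmax}{1-\gamma}]^{\Scal\times\Acal}$ into itself, so Banach's fixed-point theorem gives it a unique fixed point. Since $Q^\star\in\Gcal$ and $\Tcal Q^\star = Q^\star$, the backup $\Tcal Q^\star$ is already in $\Gcal$ and the projection leaves it unchanged, so $\Tcal_\Gcal Q^\star = Q^\star$; hence $Q^\star$ is that unique fixed point. The equivalence then follows: $Q = Q^\star$ gives $\Tcal_\Gcal Q = Q$ and so $\|Q - \Tcal_\Gcal Q\|_{2,\mu} = 0$; conversely $\|Q - \Tcal_\Gcal Q\|_{2,\mu} = 0$ with $\mu$ fully supported forces $Q(s,a) = (\Tcal_\Gcal Q)(s,a)$ for all $(s,a)$, making $Q$ a fixed point of $\Tcal_\Gcal$, hence $Q = Q^\star$. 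I expect the only genuine obstacle to be the first step---pinning down the closed form of $\Tcal_\Gcal$ and justifying the tower-rule manipulation, and being careful that $\Gcal$ is the \emph{full} piecewise-constant class on its partition so that the unconstrained cell-wise minimizer actually lies in $\Gcal$; the contraction and fixed-point parts are then routine.
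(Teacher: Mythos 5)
Your proof is correct. Note that the paper does not prove this proposition in-house: it defers to Gordon's averager argument as formalized in Proposition 4 of \cite{xie2020batch}, whose mechanism is to identify $\Tcal_\Gcal$ with the Bellman (optimality) operator of an induced abstract MDP $M_\phi$ whose reward and transition kernels are the $\mu$-weighted averages over each cell of the partition (this is exactly the viewpoint quoted, in its $Q^\pi$ form, as Lemma~\ref{lem:Mphi} in the appendix); contraction is then inherited because $\Tcal_\Gcal$ \emph{is} a Bellman operator of a genuine MDP, and $Q^\star\in\Gcal$ makes the partition $Q^\star$-preserving so that $Q^\star$ is its fixed point. You instead write $\Tcal_\Gcal=\Pi\circ\Tcal$ with $\Pi$ the cell-wise $\mu$-averaging projection, show $\Pi$ is a sup-norm non-expansion, and compose with the $\gamma$-contraction $\Tcal$. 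The two routes hinge on the same computation---the decoupled least-squares step whose optimal cell value is the conditional mean of the backup target, i.e.\ the tower-rule step that circumvents double sampling, and you correctly check that the range constraint $[0,\Rmax/(1-\gamma)]$ never binds so the unconstrained cell-wise minimizer lies in $\Gcal$---but they package it differently: the abstract-MDP view buys reusability (it is what lets the paper translate the machinery to $Q^\pi$ and run the error-propagation lemmas behind Theorem~\ref{thm:bvftpe}), whereas your projection-composition argument is more elementary and self-contained, needing only the standard contraction of $\Tcal$ and Banach's fixed-point theorem. One cosmetic caveat: well-definedness of $\Tcal_\Gcal$ (uniqueness of the cell averages) already uses that every cell has positive $\mu$-mass, which you correctly tie to the full-support assumption; the contraction claim should be read with that proviso, as in the paper.
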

Being piecewise constant means that 
there exists a partitioning of $\Scal\times\Acal$, and $\Gcal$ consists of all members of $[0, \tfrac{\Rmax}{1-\gamma}]^{\Scal\times\Acal}$ that remain constant within each partition. 
A partitioning whose induced $\Gcal$ satisfies (1) and (2) is also closely related to $Q^\star$-preserving 
state abstractions \cite{li2006towards}.

The problem is that it is very difficult to find $\Gcal$ that satisfies the above 2 criteria,\footnote{The ideal $\Gcal$ can be created similarly to Figure~\ref{fig:bvft_vis} according to knowledge of $Q^\star$. In Appendix~\ref{app:exp} we show that our approach closely tracks the skyline of using the ideal $\Gcal$ in a tabular domain where we can compute $Q^\star$. \label{ft:idealG}} and it will just be another set of hyperparameters to tune if we leave the design of $\Gcal$ to the user.  \cite{xie2020batch} offers a resolution in the context of selecting $Q^\star$ from $\{Q_i\}_{i=1}^m$: consider the base case of $m=2$ where we only need to select between $Q_1$ and $Q_2$. If one of them is $Q^\star$ (which can be relaxed) but we do not know which, we can still create $\Gcal_{1,2}$ that satisfies both criteria of Proposition~\ref{prop:gordon} as the minimal piecewise-constant  class such that $\{Q_1, Q_2\} \subset \Gcal_{1,2}$. If the output of each $Q_i$ takes at most $N$ possible values,  $\Gcal_{1,2}$ will be induced by partitioning $\Scal\times\Acal$ into at most $N^2$ regions; see Figure~\ref{fig:bvft_vis}L.\footnote{A common misconception is that \bvft only works under the ``assumption'' that $Q^\star$ is piecewise constant under certain given partition or metric over $\Scal\times\Acal$; this is not the case. The concept of piecewise-constant functions is used as an \textit{internal mechanism} in \bvft and is not an assumption in any way. The existence of $\Gcal_{1,2}$ is \textit{unconditional} and does not rely on any structural properties of the underlying MDP. Constructing the partition (and hence $\Gcal_{1,2}$) only requires $Q_1$ and $Q_2$, and does not require any additional prior knowledge (including but not limited to a pre-defined metric or partition over $\Scal\times\Acal$).}
\cite{xie2020batch} further shows that this idea extends to arbitrary $m$ via pairwise comparison (``tournament''): The validity of the procedure can be justified by the following simplified result:\footnote{Among all the $(i,j)$ pairs considered in $\bvftloss$, many of them will not satisfy $Q^\star \in \{Q_i, Q_j\}$ and thus the previous reasoning for the $m=2$ case and Proposition~\ref{prop:gordon} do not apply. Despite this, these $(i,j)$ pairs do not affect the validity of the $\bvftloss$; see \cite{xie2020batch} for a detailed explanation.}
\begin{proposition}[Simplification of \cite{xie2020batch}; see Appendix~\ref{app:bvftproof} for a proof sketch] \label{prop:bvft}
If $Q^\star \in \{Q_i\}_{i=1}^m$ and $\mu$ is fully supported on $\Scal\times\Acal$, then $Q_i = Q^\star \Leftrightarrow$ $Q_i$ having $0$ $\bvftloss$, where 
\begin{align} \label{eq:bvftloss}
\bvftloss(Q_i \,; \{Q_j\}_{j=1}^m) := \max_{j} \|Q_i - \Tcal_{\Gcal_{i,j}} Q_i \|_{2, \mu}.
\end{align}
\end{proposition}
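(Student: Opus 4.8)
The plan is to reduce both directions of the equivalence to Proposition~\ref{prop:gordon}, applied to the pairwise classes $\Gcal_{i,j}$. First I would record two structural facts about $\Gcal_{i,j}$. (i) It is piecewise-constant: since each of $Q_i,Q_j$ takes finitely many values, their level sets induce two partitions of $\Scal\times\Acal$, and $\Gcal_{i,j}$ --- the minimal piecewise-constant class containing $\{Q_i,Q_j\}$ --- is exactly the set of all functions in $[0,\tfrac{\Rmax}{1-\gamma}]^{\Scal\times\Acal}$ that are constant on each cell of the common refinement of those two partitions; this common refinement is the coarsest partition refining both, so the ``minimal'' class is well defined. (ii) If $Q^\star\in\{Q_i,Q_j\}$ then $Q^\star\in\Gcal_{i,j}$ by construction. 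Consequently, for every pair $(i,j)$ with $Q^\star\in\{Q_i,Q_j\}$, the class $\Gcal_{i,j}$ meets conditions (1) and (2) of Proposition~\ref{prop:gordon}, which --- together with $\mu$ being fully supported --- yields $Q = Q^\star \Leftrightarrow \|Q - \Tcal_{\Gcal_{i,j}} Q\|_{2,\mu} = 0$.

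For the ``$\Rightarrow$'' direction I would assume $Q_i = Q^\star$ and fix an arbitrary $j$. Since $Q^\star = Q_i \in \{Q_i,Q_j\}$, fact (ii) holds for $(i,j)$, so the displayed equivalence applies to $\Gcal_{i,j}$; instantiating it at $Q = Q_i = Q^\star$ gives $\|Q_i - \Tcal_{\Gcal_{i,j}} Q_i\|_{2,\mu} = 0$. As $j$ is arbitrary, $\bvftloss(Q_i\,;\{Q_j\}_{j=1}^m) = \max_j \|Q_i - \Tcal_{\Gcal_{i,j}} Q_i\|_{2,\mu} = 0$. (This direction uses only $Q_i = Q^\star$; it never needs $Q^\star$ to occur among the other $Q_j$, since here $Q^\star\in\{Q_i,Q_j\}$ automatically.)

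For the ``$\Leftarrow$'' direction I would assume $\bvftloss(Q_i\,;\{Q_j\}_{j=1}^m) = 0$, so each term of the max vanishes: $\|Q_i - \Tcal_{\Gcal_{i,j}} Q_i\|_{2,\mu} = 0$ for every $j$. By hypothesis $Q^\star\in\{Q_j\}_{j=1}^m$, so I pick $j^\star$ with $Q_{j^\star} = Q^\star$; fact (ii) holds for $(i,j^\star)$, the displayed equivalence applies to $\Gcal_{i,j^\star}$, and it converts $\|Q_i - \Tcal_{\Gcal_{i,j^\star}} Q_i\|_{2,\mu} = 0$ into $Q_i = Q^\star$. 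The pairs $(i,j)$ with $Q_j\neq Q^\star$, for which Proposition~\ref{prop:gordon} need not apply, are harmless: they only contribute further nonnegative terms to the max, and the argument relies solely on the single well-behaved index $j^\star$.

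I expect the only non-routine part to be establishing fact (i)/(ii) --- that the minimal piecewise-constant class generated by $\{Q_i,Q_j\}$ is well defined, is itself piecewise-constant, and contains $Q^\star$ whenever one of $Q_i,Q_j$ equals $Q^\star$, so that Proposition~\ref{prop:gordon} can be invoked with no structural assumption on the MDP. Once that is in place, both directions are one-line corollaries. For completeness I would also note that $\Tcal_{\Gcal_{i,j}} Q$ is uniquely determined on the support of $\mu$ (on each cell the optimal constant is the $\mu$-weighted average of $\Tcal Q$ over that cell, well defined because every cell has positive $\mu$-mass), though this is already subsumed by Proposition~\ref{prop:gordon}.
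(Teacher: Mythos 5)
Your proposal is correct and follows essentially the same route as the paper's proof sketch: establish that $\Gcal_{i,j}$ is piecewise-constant and contains $Q^\star$ whenever $Q^\star \in \{Q_i, Q_j\}$, invoke Proposition~\ref{prop:gordon} for such pairs, and handle the two directions via the pair $(i, j^\star)$ with $Q_{j^\star} = Q^\star$ (the paper phrases the second direction contrapositively, but the content is identical).
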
 
For each $Q_i$, \bvft calculates its projected Bellman error w.r.t.~$\Gcal_{i,j}$---where $\Gcal_{i,j}$ is created just like $\Gcal_{1,2}$ but using $Q_i$ itself and every other $Q_j$---and scores $Q_i$ using the worst-case projected error. 
See \cite{xie2020batch} for the complete theory that accounts for $Q^\star \notin \{Q_i\}_{i=1}^m$ and finite-sample effects. For readability we will stick to the above simplified reasoning.   

\para{Computation} The empirical version of \bvftloss can be computed exactly in closed form. The central step is the calculation of $\Tcal_\Gcal Q$. Recall that $\Gcal$ is piecewise constant and induced by a $\Scal\times\Acal$ partitioning. 
For any $(\tilde{s}, \tilde{a})$,  $(\Tcal_{\Gcal}Q)(\tilde{s},\tilde{a})$ is simply the average of $r + \gamma \max_{a'} Q(s',a')$ over all data points $(s,a,r,s')$ where $(s,a)$ falls in the same partition as $(\tilde{s}, \tilde{a})$. Implemented with running averages, the computational complexity is $O(m^2 n)$ for $m$ candidate functions and $n$ data points in addition to $(|\Acal|+1)mn$ candidate-function evaluations (i.e., caching $Q_i(s,a)$ and $\max_{a'}Q_i(s', a')$).

\begin{figure}[t]
\centering
\includegraphics[scale=0.5, trim=50 570 200 0, clip]{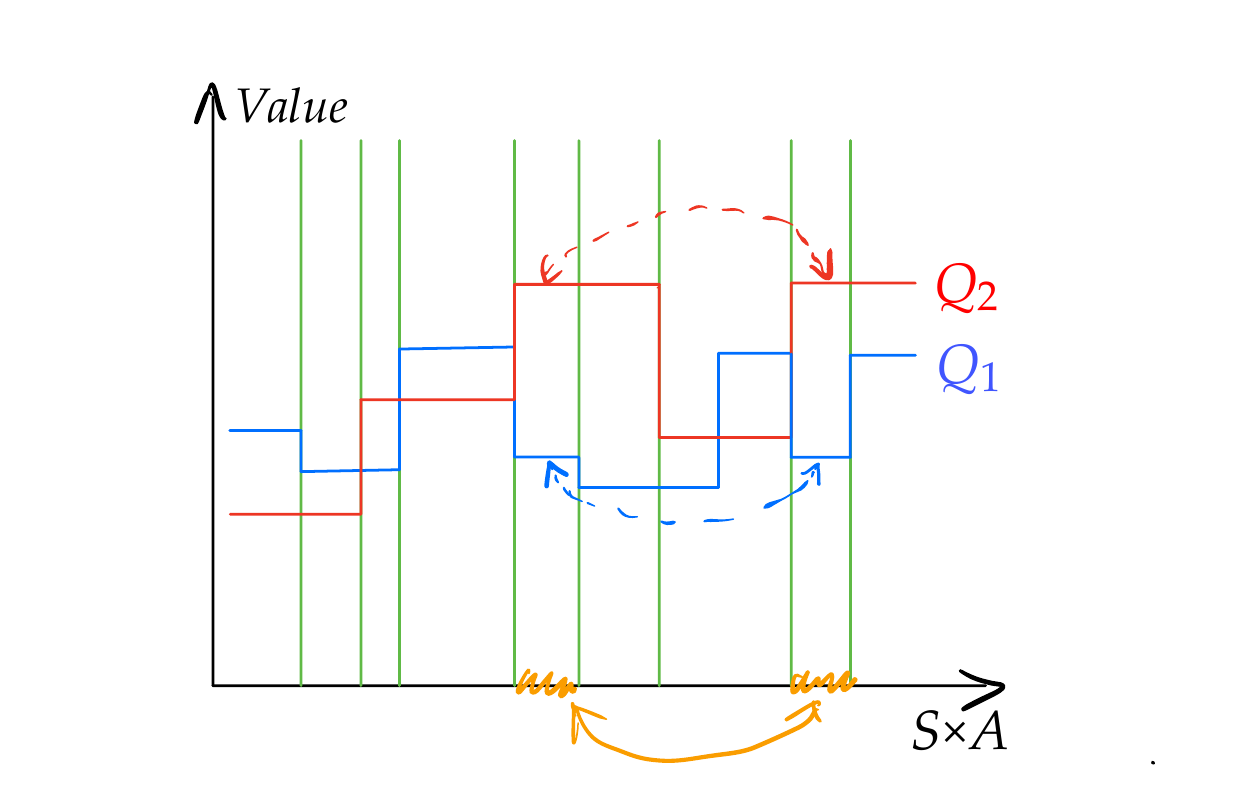} \quad
\includegraphics[width=0.33\columnwidth, trim=0 0 0 300]{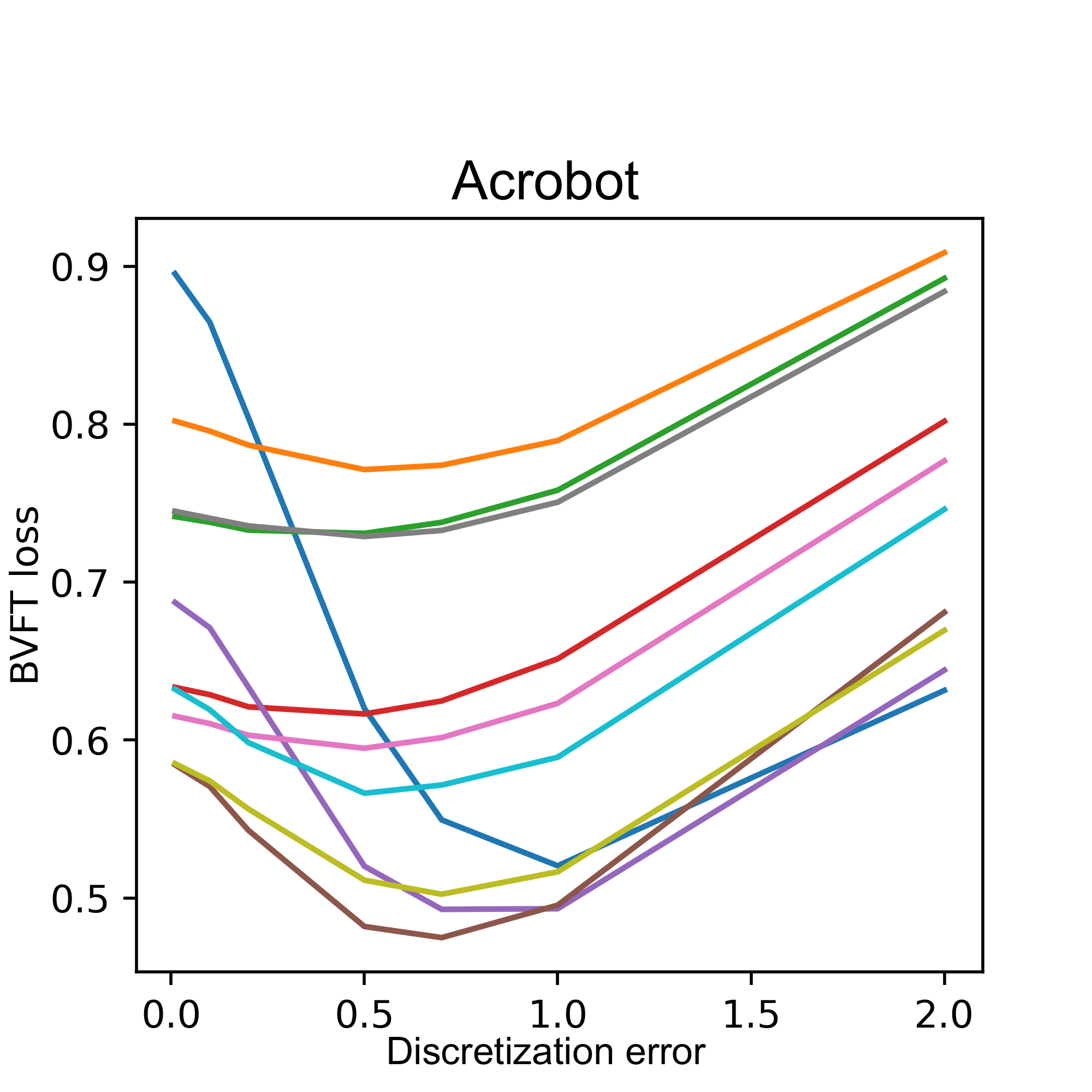}
\caption{\textbf{Left:} Visualization of the partitioning that induces $\Gcal_{1,2}$. The 2 subsets of $\Scal\times\Acal$ marked orange belong to the same partition despite being separated  from each other, 
because $Q_1$ and $Q_2$ are constant across them. 
Since $\Scal\times\Acal$ is partitioned according to $Q$'s \textit{output}, the number of partitions is 
\textit{independent} of $|\Scal\times\Acal|$, allowing \bvft to scale to arbitrarily complex state-action spaces with limited data. 
\textbf{Right:} \bvftloss vs.~discretization error $\epsdct$ of 10 candidate $Q$'s of a typical run in Acrobot, all having the U-shape predicted by our theoretical reasoning. \label{fig:bvft_vis}}
\end{figure}

\section{\bvft with Automatic Resolution Selection}
\label{sec:bvft}
Despite the appealing theoretical properties, it is unclear if \bvft can be converted into a practical algorithm. The sample complexity of estimating \bvftloss depends on the complexity of $\Gcal_{i,j}$, which is controlled by $N^2$ where $N$ is the number of possible values in $[0, \tfrac{\Rmax}{1-\gamma}]$ each $Q_i$ can take. To handle the issue that $N$ can be very large or even infinite, \cite{xie2020batch} discretizes the output of $\{Q_i\}_{i=1}^m$ up to $\epsdct$ error before producing $\{\Gcal_{i,j}\}$, where $\epsdct$ needs to be carefully chosen to trade-off between discretization errors and the complexity of $\Gcal$ (which is now $O(1/\epsdct^2)$). The theoretical value of $\epsdct$ (as in \cite[Theorem 2]{xie2020batch}) not only relies on unknown properties of the MDP and the data, but is also very small which makes \bvftloss expensive to estimate. Indeed, the previous theory predicts that \bvftloss becomes an unstable statistic when $\epsdct\to 0$ due to the unbounded complexity of $\Gcal_{i,j}$. 

To resolve this issue, we draw an interesting connection between \bvftloss and the na\"ive ``1-sample'' estimator for $\|Q-\Tcal Q\|$, which reveals the unexpected behavior of $\bvftloss$ in the regime of $\epsdct \to 0$ that differs from the previous theoretical predictions: 
\begin{proposition} \label{prop:1sample}
Consider $\EE_{\mu}[(Q(s,a) - r - \gamma \max_{a'} Q(s',a'))^2]$, the na\"ive and biased estimator for $\|Q-\Tcal Q\|_{2, \mu}^2$ which we call ``1-sample BR (Bellman residual)''. If each candidate $Q_i$ never predicts the same value for any two $(s,a)$ pairs seen in the dataset, then with $\epsdct=0$, the empirial version of $\bvftloss(Q_i; \{Q_j\}_{j=1}^m)$ and 1-sample BR of $Q_i$ coincide (up to squaring). 
\end{proposition}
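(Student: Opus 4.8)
The plan is to combine the closed-form description of the empirical projected Bellman operator (the ``Computation'' paragraph) with the observation that the injectivity hypothesis makes the partition underlying $\Gcal_{i,j}$ completely degenerate. Recall that $\Gcal_{i,j}$ is the minimal piecewise-constant class containing $\{Q_i, Q_j\}$, so the partition of $\Scal\times\Acal$ that induces it consists of the joint level sets of the pair $(Q_i,Q_j)$: two points $(s,a)$ and $(\ts,\ta)$ share a cell iff $Q_i(s,a)=Q_i(\ts,\ta)$ \emph{and} $Q_j(s,a)=Q_j(\ts,\ta)$. With $\epsdct=0$ no discretization is applied, so this is exactly the partition used inside the empirical \bvftloss. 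The hypothesis says $Q_i$ assigns distinct values to the $(s,a)$ pairs in the data (which I also read as entailing that the observed $(s,a)$ pairs are themselves distinct); hence these points already lie in distinct level sets of $Q_i$ alone, and a fortiori in distinct cells of the $(Q_i,Q_j)$-partition. So every cell of $\Gcal_{i,j}$ contains at most one data point.

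Given this, the rest is direct. By the closed form for $\Tcal_{\Gcal}Q$, for any data tuple $(s_k,a_k,r_k,s_k')$ the quantity $(\Tcal_{\Gcal_{i,j}}Q_i)(s_k,a_k)$ is the empirical average of $r+\gamma\max_{a'}Q_i(s',a')$ over data points whose $(s,a)$ lies in the same cell as $(s_k,a_k)$, and by the previous step that average is over the singleton $\{k\}$. Hence $(\Tcal_{\Gcal_{i,j}}Q_i)(s_k,a_k)=r_k+\gamma\max_{a'}Q_i(s_k',a')$ for every $k$. Substituting into the empirical version of \eqref{eq:bvftloss}, the empirical $\|Q_i-\Tcal_{\Gcal_{i,j}}Q_i\|_{2,\mu}^2$ equals $\tfrac1n\sum_{k=1}^n\big(Q_i(s_k,a_k)-r_k-\gamma\max_{a'}Q_i(s_k',a')\big)^2$, which is precisely the empirical 1-sample BR of $Q_i$ and, crucially, does not depend on $j$. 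Therefore the outer $\max_j$ in $\bvftloss$ is vacuous, and $\bvftloss(Q_i;\{Q_j\}_{j=1}^m)^2$ equals the empirical 1-sample BR --- i.e.\ the two coincide up to squaring.

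A couple of small points should be spelled out to make this airtight. First, the empirical projection is a least-squares problem constrained to $\Gcal_{i,j}\subset[0,\tfrac{\Rmax}{1-\gamma}]^{\Scal\times\Acal}$, so a priori the per-cell optimum is the target clipped to $[0,\tfrac{\Rmax}{1-\gamma}]$; but since $r\in[0,\Rmax]$ and $Q_i$ has range $[0,\tfrac{\Rmax}{1-\gamma}]$, each target $r+\gamma\max_{a'}Q_i(s',a')$ already lies in that interval, so the box constraint is slack and the unconstrained per-cell average (which is what the closed form uses) is feasible and optimal. Second, the injectivity hypothesis is doing exactly the work of ensuring singleton cells: if an identical $(s,a)$ occurred twice in the data --- necessarily with the same $Q_i$ value --- its cell would hold two points, $\Tcal_{\Gcal_{i,j}}Q_i$ there would be the \emph{average} of two TD targets, and the per-point squared errors would no longer sum to the 1-sample BR. I would therefore state the proposition under the understanding that the observed $(s,a)$ pairs are distinct.

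There is no genuinely hard step here: once the partition-degeneration observation is made, the proposition is bookkeeping. The only thing to be careful about is the formulation of the hypothesis (it must really force singleton cells), because that is precisely what collapses $\Tcal_{\Gcal_{i,j}}$ to the identity on the data and thereby exposes the biased, double-sampling behavior of \bvftloss in the $\epsdct\to0$ regime that the proposition is meant to highlight.
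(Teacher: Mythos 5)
Your proposal is correct and follows essentially the same route as the paper's proof: with $\epsdct=0$ and distinct $Q_i$-values on the data, each cell of the $(Q_i,Q_j)$-partition contains a single data point, so $\emp{\Tcal}_{\Gcal_{i,j}}Q_i$ reduces to the 1-sample Bellman update, the per-$j$ empirical losses all equal the 1-sample BR, and the $\max_j$ is vacuous. Your added remarks (slackness of the range constraint and reading the hypothesis as forcing distinct $(s,a)$ pairs) are fine clarifications of the same argument rather than a different approach.
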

\begin{proof}
Given a dataset $D$ consisting of $(s,a,r,s')$ tuples, the empirical version of 1-sample BR is $\frac{1}{|D|}\sum_{(s,a,r,s') \in D}(Q(s,a)-r-\gamma \max_{a'} Q(s',a'))^2$, and that of \bvftloss squared is $\max_{j} \frac{1}{|D|}\sum_{(s,a,r,s') \in D}(Q(s,a)- (\emp{\Tcal}_{\Gcal_{i,j}} Q)(s,a))^2$, where $\emp{\Tcal}_{\Gcal_{i,j}}$ is the empirical version of $\Tcal_{\Gcal_{i,j}}$ based on the same dataset. It suffices to show that for any data point $(s,a,r,s')$, $(\emp{\Tcal}_{\Gcal_{i,j}} Q)(s,a) =  r + \gamma \max_{a'} Q_i(s',a')$, which follows immediately from $\epsdct=0$ and $Q_i$ never predicting the exact same value twice, since $\Gcal_{i,j}$ does not provide any aggregation over data points in this case and $\emp\Tcal_{\Gcal_{i,j}}$ coincides with the 1-sample Bellman update (c.f.~the paragraph on computation in Section~\ref{sec:bvft_background}). 
\end{proof}
This result implies that, 1-sample BR, which is a reasonable objective and coincides with $\|Q-\Tcal Q\|$ in deterministic environments, provides a safeguard to \bvftloss when $\epsdct$ is too small. 
Since the double-sampling bias of 1-sample BR is positive \cite{baird1995residual}, we expect \bvftloss to gradually decrease as $\epsdct$ increases, hit a minimum, and increase again due to discretization errors when $\epsdct$ becomes too large.\footnote{This can be violated in some extreme cases; see Appendix~\ref{app:alg} for details. \label{ft:largedct}} 
Indeed, this is precisely what we observe empirically; see Figure~\ref{fig:bvft_vis}R. 

Based on this novel observation, we propose to search for a grid of discretization errors in \bvft and pick the resolution that minimizes the loss (Eq.\eqref{eq:bvftloss}); see pseudocode in Appendix~\ref{app:alg}. In the experiments we will always use this rule to automatically select the discretization resolution for \bvft and its variants. The remaining questions can only be answered empirically: 
Does \bvft ever exhibit more interesting behavior than 1-sample BR (especially given their intimate relationship),  is our resolution selection rule a good one, and how does \bvft compare to other baselines? 

\begin{figure}[t]
\centering
\includegraphics[width=\columnwidth]{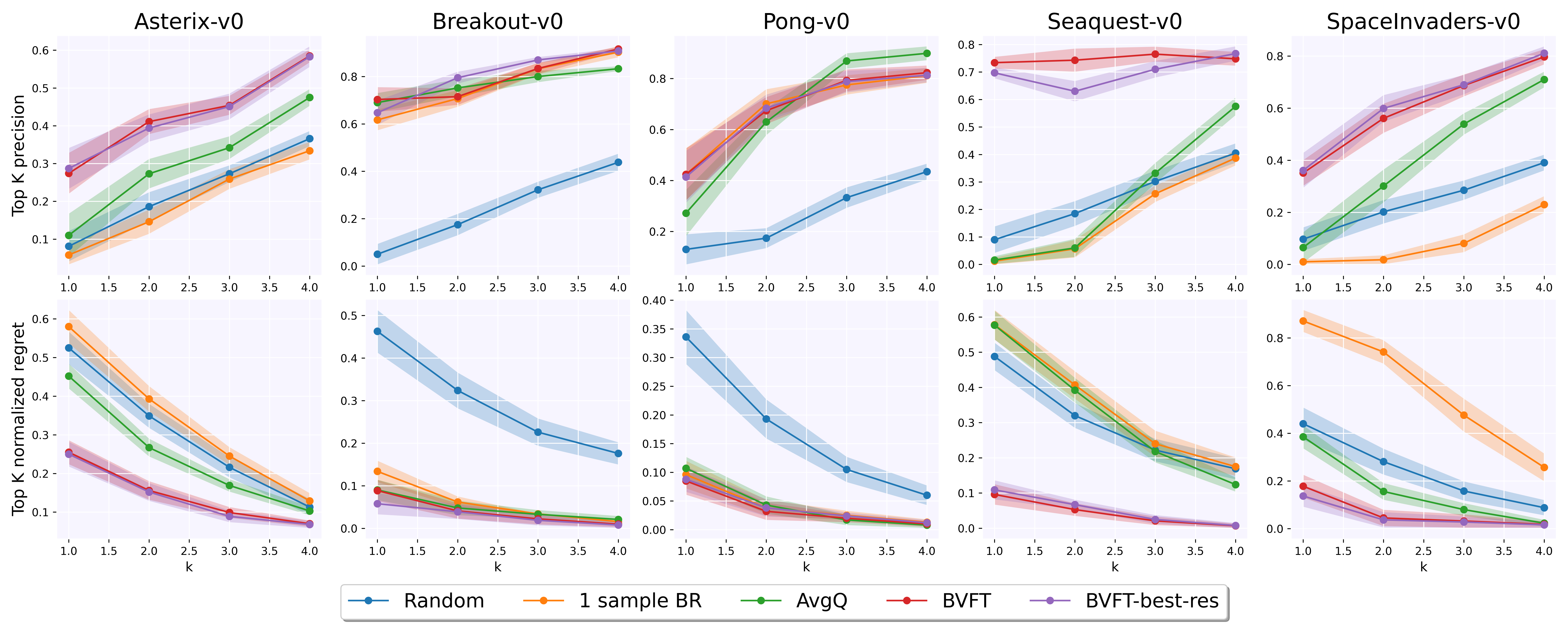}
\caption{Top-$k$ metrics of policy rankings vs.~$k$ in Atari. Row 1 shows top-$k$ precision (the higher the better), and Row 2 shows top-$k$ regret (the lower the better). Training algorithms are BCQ with different hyperparameters. The dataset for policy selection has 50,000 transition, which is an order of magnitude less than needed by FQE in Atari (see FQE in Enduro \cite{voloshin2019empirical}, as well as Figure~\ref{fig:bvft_vs_fqe}). \label{fig:atari-topk}}
\end{figure}

\para{Empirical Evaluation} To answer these questions, we empirically compare \bvft, 1-sample BR, and another simple hyperparameter-free heuristic, AvgQ \cite{garg2020batch}, which simply ranks $\{Q_i\}_{i=1}^m$ based on their average value on the data. ``Random'' ranks the policies in a completely random manner. \bvft and 1-sample BR rank $\{Q_i\}_{i=1}^m$ in ascending order of their loss functions, respectively. The experiments are done on 5 Atari games (Figure~\ref{fig:atari-topk}) and 4 Gym control problems (Figure~\ref{fig:control_topk}; action space is made discrete in Pendulum). 

\textit{\textbullet~ Does \bvft ever exhibit more interesting behavior than 1-sample BR?} Perhaps surprisingly, \bvft deviates significantly from the behavior of 1-sample BR, and almost always outperforms the latter by a wide margin. This is particularly interesting in CartPole and Pendulum, where the deterministic dynamics make 1-sample BR an \textbf{unbiased} estimate of $\|Q-\Tcal Q\|$, and we expected it to perform well. Contrary to our expectation, 1-sample BR performs poorly even in these domains, where \bvft often performs much better. 
In fact, we observe similar phenomenon in a version of Taxi where we 
can compute $\|Q-\Tcal Q\|$ as a skyline; see Appendix~\ref{app:exp}. 

\textit{\textbullet~ Is our resolution selection rule effective?} To examine the effectiveness of our resolution selection rule, we compare to a skyline of \bvft itself (``\bvft-best-res''), where the best fixed resolution is chosen 
based on average statistics in the hindsight. As we can see, \bvft with automatic resolution selection closely tracks the performance of \bvft-best-res---to the extent that they are mostly indistinguishable---indicating that our resolution selection rule is near-optimal. 

\textit{\textbullet~ Comparison to AvgQ.} The simple heuristic of picking $Q$ with the highest predicted value---which is sometimes used in prototypical applications \cite{garg2020batch}---can be surprisingly effective sometimes, such as in Pendulum, LunarLander, and Pong. \bvft performs equally well in these domains. Moreover, as results from other domains reveal, AvgQ is very unstable and can fail catastrophically, such as in Acrobot and Seaquest, and is much less robust compared to \bvft. 

One may wonder if AvgQ works better with \textit{pessimistic} training algorithms: if every $Q_i$ is an under-estimation of the true return, then maximizing $Q$ will be a well-justified heuristic. However, such pessimism is not always guaranteed especially when the function approximation is misspecified. In Appendix~\ref{app:exp} we show additional experiments in Atari with a pessimistic algorithm CQL, and the results are qualitatively similar to Figure~\ref{fig:atari-topk}.

\begin{figure}[t]
\centering
\includegraphics[width=0.89\columnwidth]{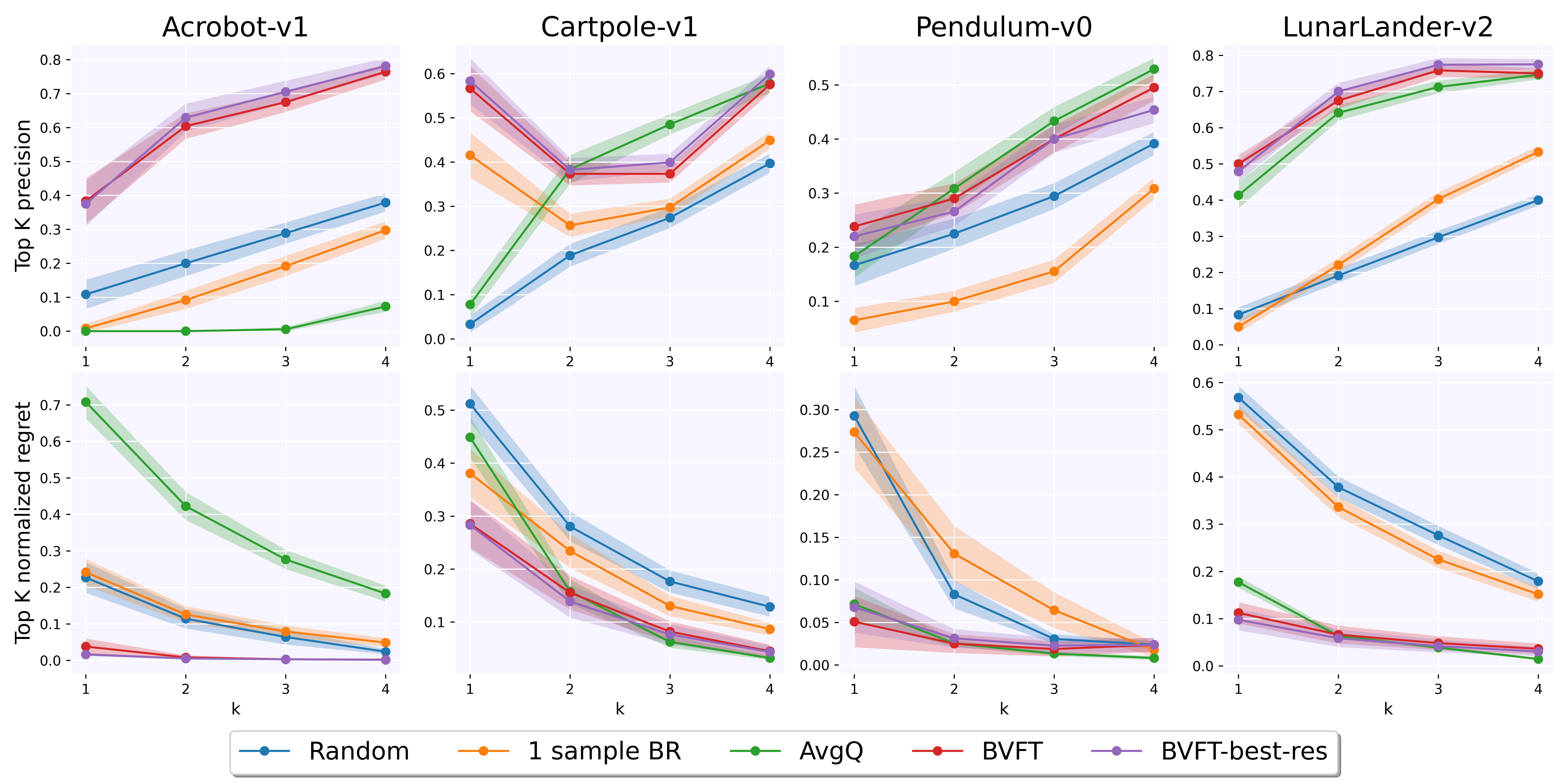}
\caption{Results in Gym control problems.  The dataset for policy selection has 50,000 transitions. \label{fig:control_topk}}
\end{figure}

\para{Comparison to OPE} Despite the lack of known method for tuning OPE's hyperparameters except for ``cheating'' in the simulator using online roll-outs (in Section~\ref{sec:bvftpe} we will combine \bvft with OPE to address this issue), which makes it very difficult to have fair comparisons with OPE-based policy-selection methods, it is still instructive to have a rough sense of how our method compares to OPE. 
In this and the next sections, we choose Fitted Q-Evaluation (FQE) as a representative OPE algorithm.\footnote{Despite recent exciting developments in marginalized importance sampling (MIS), FQE often shows strong empirical performance thanks to its simplicity \cite{voloshin2019empirical, paine2020hyperparameter, fu2021benchmarks}, whereas MIS is not as off-the-shelf due to its difficult optimization, so we defer the comparison as well as combining \bvft with MIS to future work.} 
Figure~\ref{fig:bvft_vs_fqe} shows the ranking metrics vs.~sample size in 5 Atari games. We tune the neural architecture for FQE by choosing the training architecture that produces the \textit{best} policy in Asterix.\footnote{The recent OPE benchmarks \cite{paine2020hyperparameter, fu2021benchmarks} do not include Atari, so we choose our own hyperparameters.} 
While FQE has equally good performance compared to \bvft in the large-sample regime ($10^6$ transitions, which is typically required for FQE in Atari), the performance degradation is severe when sample size decreases. In comparison, \bvft is much more sample-efficient and provides almost the same level of performance with about 20x less samples, which is also 
what we used in Figure~\ref{fig:atari-topk}. 

\para{Additional Results} In Appendix~\ref{app:exp} we examine the sensitivity of different methods to the dataset size and exploratoriness. \bvft remains advantageous compared to the baselines, is insensitive to data exploratoriness, and enjoys improved performance when more data becomes available unlike other hyperparmeter-free baselines. 

\begin{figure}[t]
\centering
\includegraphics[width=\columnwidth]{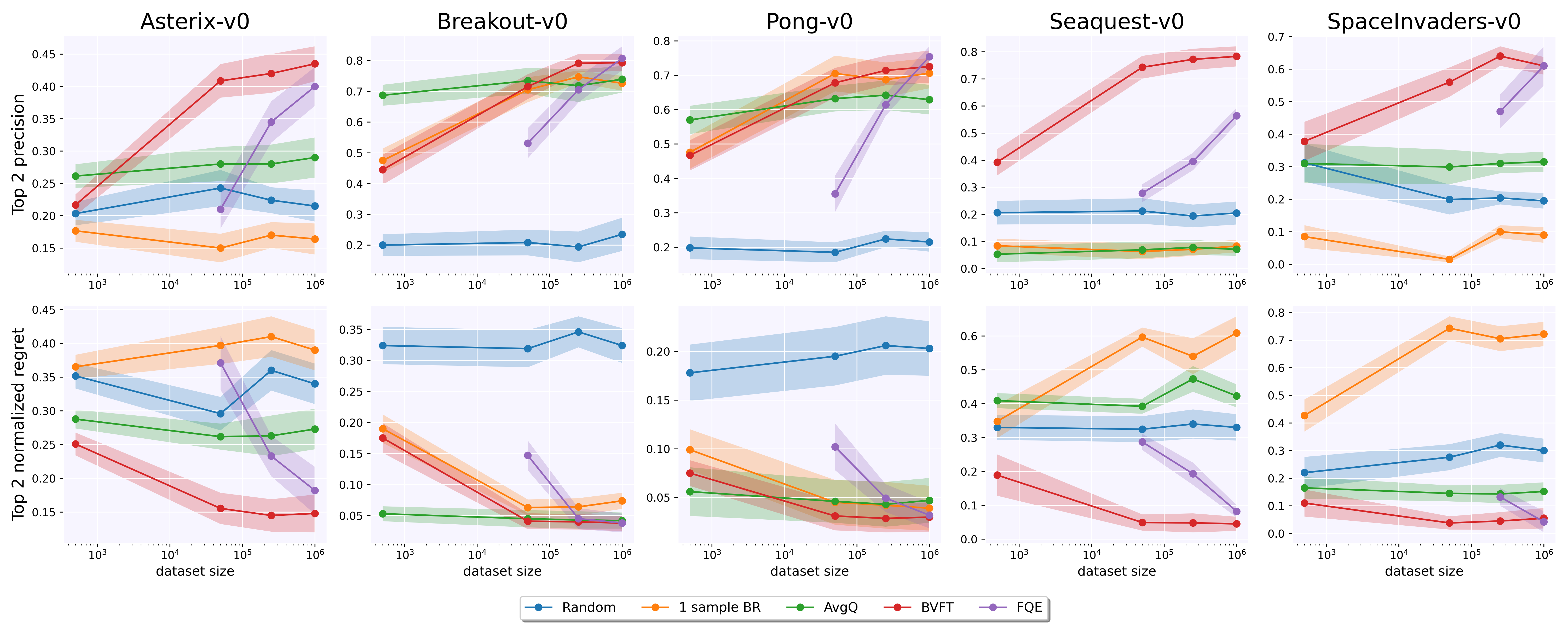}
\caption{Comparison to FQE on Atari games. The figures show top-2 metrics vs.~sample sizes. We did not test FQE on small sample sizes since the trend of performance degradation is clear. \label{fig:bvft_vs_fqe} }
\end{figure}

\section{Fighting Poor Value-Function Estimates with \bvft + OPE}
\label{sec:bvftpe}
The experiments so far are on discrete-action domains, and the candidate policies are always greedy w.r.t.~the $Q$-function. However, in continuous-action domains, actor-critic-type algorithms are often used, where a critic $Q$ is trained to evaluate a parameterized policy $\pi$, and the actor $\pi$ is in turn improved based on $Q$. The trained $(\pi, Q)$ pair in general does not satisfy $\pi = \pi_Q$, so it seems unwise to ignore $\pi$ and only focus on $Q$ in policy selection. On a related note, $\argmax_{a'}$ is often expensive if not infeasible to calculate in continuous action spaces, so we need to make changes to \bvft anyway. 

More importantly, the success of \bvft relies on the existence of a reasonable approximation of $Q^\star$ among the candidate functions, which does not always hold especially for actor-critic algorithms. Indeed, in the Mujoco experiments shown in Figure~\ref{fig:mujoco-topk}, we observe poor performance of all hyperparameter-free methods, including 
variants of \bvft for continuous-action domains we develop later, and no method can consistently outperform the trivial baseline of random ranking. 
We address this issue of poor critics in the rest of this section. The issue is quite complicated and has many contributing factors---as we will explain subsequently---requiring us to take a multi-step approach to address one factor at a time.

\begin{figure}[t]
\centering
\includegraphics[width=0.65\columnwidth]{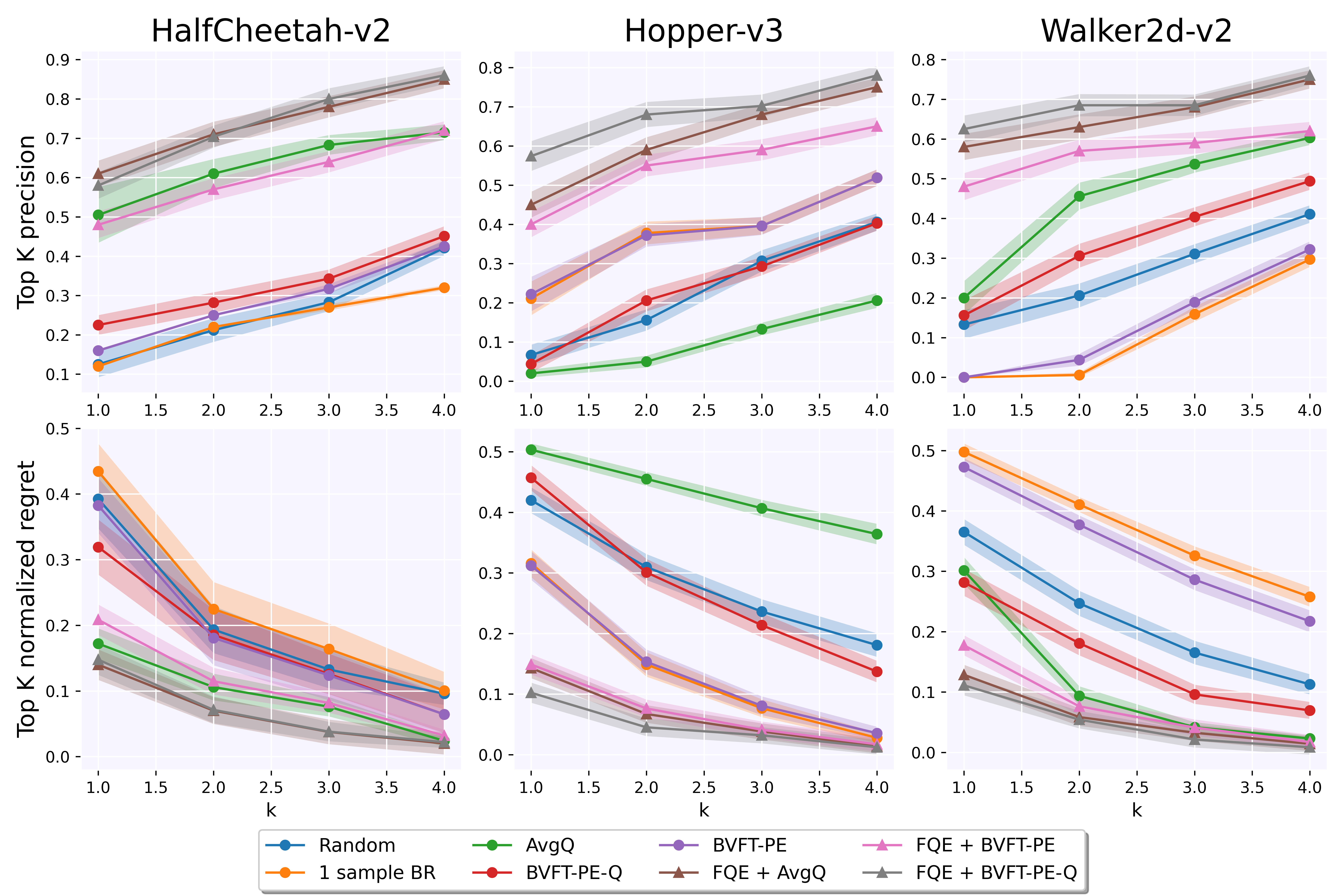}
\caption{Top-$k$ metrics vs.~$k$ across 3 Mujoco domains with continuous actions. All methods use a dataset of size $50,000$, except that the FQE component of any method uses $10^6$. 
\label{fig:mujoco-topk} }
\end{figure}

\para{Step 1: \bvftpe and \bvftpeq for Joint Selection of $(\pi, Q)$} We first address the issue that  $\argmax_{a'}$  may be infeasible to calculate in continuous-action domains, and that both $\pi$ and $Q$ need to be taken into consideration for actor-critic algorithms. To address this issue, we propose a variant of \bvft, called \bvftpe, with the following loss function:
\begin{align} \label{eq:bvftpeloss}
\bvftpeloss((\pi_i, Q_i) \,; \{(\pi_j, Q_j)\}_{j=1}^m) := \max_{j} \|Q_i - \Tcal^{\pi_i}_{\Gcal_{i,j}} Q_i \|_{2, \mu},
\end{align}
where $\Gcal_{i,j}$ is exactly the same as in \bvft, and $\Tcal^{\pi_i}_{\Gcal_{i,j}}$ is the same as Eq.\eqref{eq:mspbe}, except that $\max_{a'}Q(s',a')$ should be replaced by $Q(s', \pi(s'))$. (In actor-critic algorithms, $\pi$ is often a stochastic policy, in which case the term means $\EE_{a' \sim \pi(\cdot|s')} [Q(s', a')]$.) This loss function naturally generalizes the $\bvftloss$: when each $\pi_i$ is the greedy policy of $Q_i$, we immediately have $\Tcal^{\pi_i}_{\Gcal_{i,j}} Q_i = \Tcal_{\Gcal_{i,j}} Q_i$, thus recovering \bvftloss as a special case. 

A closer inspection of Eq.\eqref{eq:bvftpeloss}  reveals an interesting fact: $\bvftpeloss((\pi_i, Q_i)) = 0$ as long as $Q_i = Q^{\pi_i}$, so \bvftpeloss is really a loss function for \textit{policy evaluation}, hence the name \bvftpe. (We will actually provide the theoretical guarantees of $\bvftpe$ for policy evaluation at the end of this section; see Theorem~\ref{thm:bvftpe}.) 
While the loss recovers $\bvftloss$ when $\pi = \pi_{Q}$, more generally there can exist the degenerate cases of $Q = Q^{\pi}$ but $\pi$ itself being a poor policy, where $\bvftpeloss((\pi, Q))$ is still $0$. We address this issue by subtracting a $Q$ term from the $\bvftpeloss$: $\bvftpeq:= \bvftpeloss - \lambda \EE_{\mu}[Q]$, and the structure of this loss resembles a telescoping identity for $J(\pi)$ commonly used in the OPE literature; see Appendix~\ref{app:bvftpeq} for details.

\para{Step 2: Re-fitting $Q$ with OPE} The above derivations address some of the basic theoretical issues, but still implicitly assume that at least one good $\pi_i$ is paired with a $Q_i \approx Q^{\pi_i}$. In actor-critic algorithms, however, we often observe that the actor $\pi$ converges way before the critic $Q$ does, leaving us with a poorly estimated $Q$. Indeed, we have already seen in Figure~\ref{fig:mujoco-topk} that \bvftpe and \bvftpeq are still not effective when applied to the candidate $(\pi, Q)$ pairs. 

A natural solution to the problem is to use OPE algorithms to refit $Q^\pi$ to replace the critic $Q$, in order to provide a more accurate value function. We implement this using FQE as the OPE algorithm, though in principle we can use any OPE algorithm that provides an estimate of  $Q^\pi$, including kernel loss \cite{feng2019kernel}, MQL \cite{uehara2020minimax},  or even a model-based method through planning. Figure~\ref{fig:mujoco-topk} shows that \bvftpe and \bvftpeq enjoy strong empirical performance. However, the baseline of simply ranking the policies according to FQE itself (``FQE+AvgQ'') is equally effective, which suggests that most of the performance gains should be attributed to FQE. 
That said, even if one's conclusion is that ``OPE-based policy selection is more superior in continuous-action domains'', we are just all the way back to where we started in Section~\ref{sec:intro}: 
\begin{center}
\textit{How can we tune the hyperparameters for OPE itself?}
\end{center}

\begin{figure}[t]
  \centering
\includegraphics[width=0.63\columnwidth]{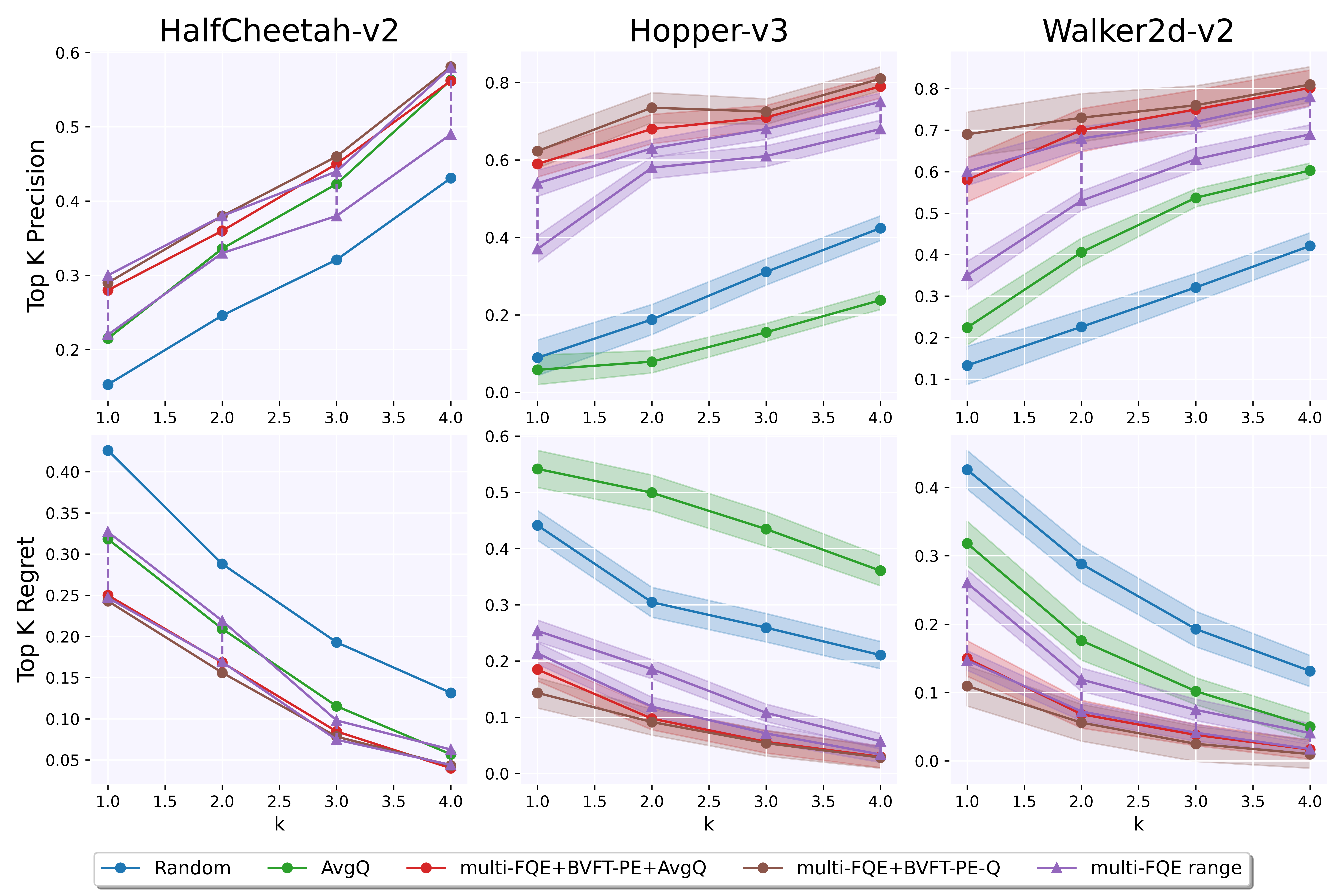}
\raisebox{0.1\height}{  \includegraphics[width=0.35\columnwidth]{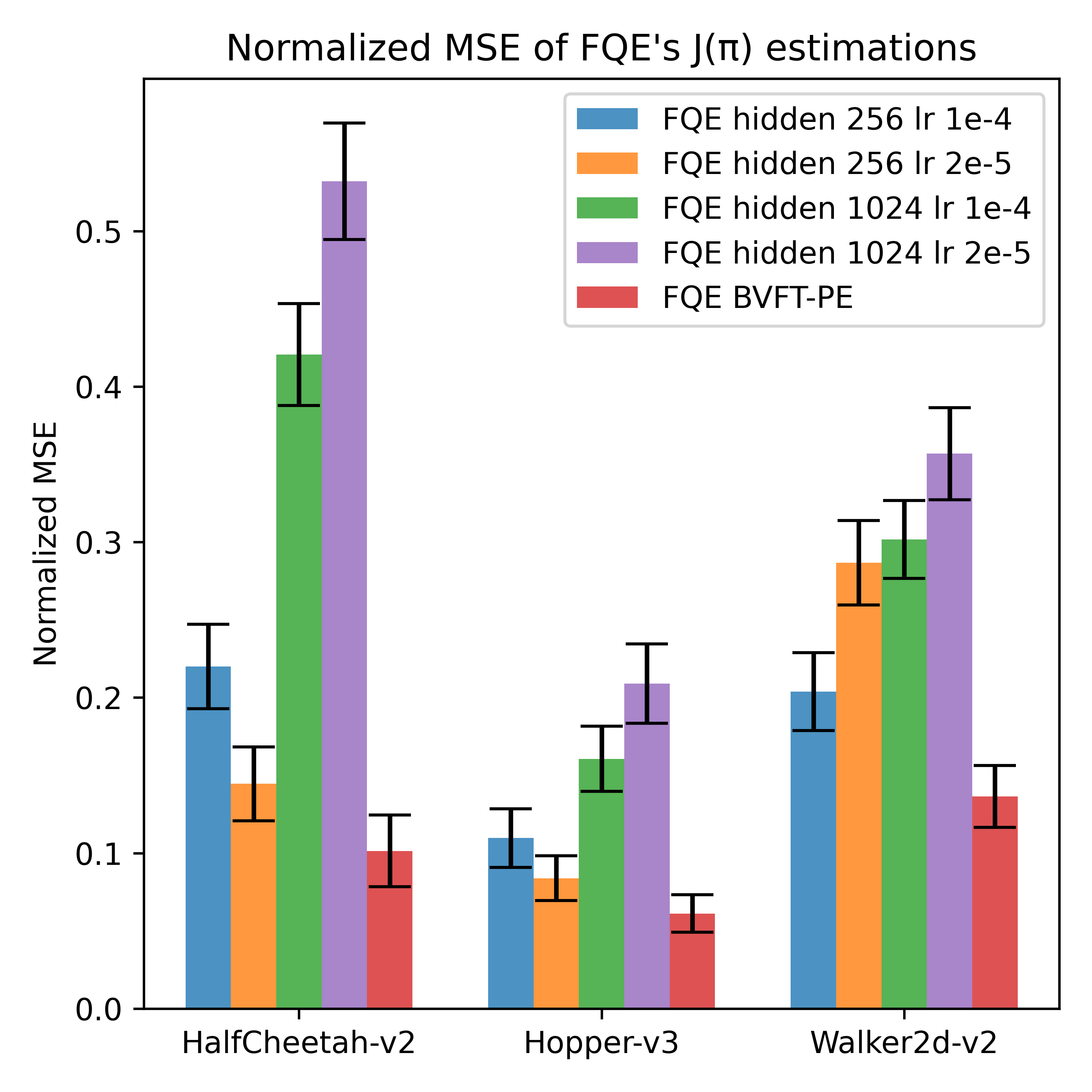}}
\caption{\textbf{Left:} Policy selection with multiple FQE instances in Mujoco. To avoid cluttering, we show the range of FQE performance across different hyperparameters (with upper and lower bounds connected by vertical dashed lines), and have removed error bars from HalfCheetah for better visibility (their widths are similar to the other two figures). The curves are cluttered in HalfCheetah and both our strategies perform similarly to the upper bound.  \textbf{Right:} OPE accuracy for using \bvftpe for hyperparameter tuning in FQE, where $J(\pi)$ is approximated by $\EE_{s \sim d_0} [Q(s, \pi(s))]$. \label{fig:multi-ope}} \end{figure}

\para{Step 3: Re-fitting $Q$ with \textit{Multiple} OPE Algorithms} 
Our last idea is retrospectively simple, yet allows us to combine the strengths of \bvft and OPE and get the best of both worlds. Suppose we have $L$ OPE algorithms which we wish to select from. We will simply run each OPE algorithm on each policy, producing $m\times L$ candidate $(\pi, Q)$ pairs in total, in the form of $\{(\pi_i, Q_i^l)\}$, where $Q_i^l$ is the estimation of $Q^{\pi_i}$ by the $l$-th OPE algorithm. There are two strategies we can proceed with:

\para{Strategy 1:} Run \bvftpeq on all $m\times L$ pairs of $(\pi, Q)$. Rank the $m$ policies according to their highest position in the ranking produced by $\bvftpeq$. 

\para{Strategy 2:} Within each $\pi_i$, use \bvftpe to select $Q_i^{l_i^\star} \approx Q^\pi$, then rank the policies based on the predictions of $Q_{i}^{l_i^\star}$ (e.g., using AvgQ).

\para{Applicability to Arbitrary Training Algorithms} We motivated \bvftpe and \bvftpeq by actor-critic training and assumed that training algorithms must produce $(\pi, Q)$ pairs. However, the approaches proposed above can be readily applied to any training algorithms: we discard the $Q$ from training in Step 2, and therefore the training algorithms do not need to produce value functions in the first place. 

\para{Empirical Evaluation} We use these strategies in the same experiment setting as Figure~\ref{fig:mujoco-topk}, with multiple instances of FQE using different hyperparameters (see legend of Figure~\ref{fig:multi-ope}R).\footnote{Here the training algorithms are BCQ. In Appendix~\ref{app:exp} we reproduce Figure~\ref{fig:multi-ope}L with CQL as the training algorithms; see Figure~\ref{fig:mujoco-cql}.  The results are qualitatively the same.}  
The results are shown in Figure~\ref{fig:multi-ope}L, where both strategies perform similarly to or better than the best FQE instance. We emphasize that this is highly nontrivial, because \bvftpe and \bvftpeq do not observe the \textit{identities} of the FQE algorithms across different policies and runs (in fact, each FQE algorithm is represented as merely $2mn$ numbers), so matching the best FQE performance is strong evidence for \bvft algorithms' model selection capabilities. Between the two \bvft strategies,  Strategy 1 (using \bvftpeq) slightly outperforms Strategy 2, but comes with an additional hyperparameter $\lambda$; we tuned it on Hopper and use the same constant in all experiments. 
In comparison, Strategy 2 is only slightly worse and does not have its own hyperparameters, which is an advantage. 

\para{Hyperparameter Tuning in OPE} In Strategy 2, we are basically using OPE as the policy-selection algorithm, and \bvftpe as a subroutine for hyperparameter tuning for OPE itself. Since OPE is an important component of the offline RL pipeline in its own right, our procedure for tuning OPE algorithms using \bvftpe is also of independent interest. To this end, we conduct additional experiments in Mujoco to test the OPE accuracy of FQE with different hyperparameters. As Figure~\ref{fig:multi-ope}R shows, FQE with hyperparameters tuned by \bvftpe consistently outperforms the \textit{best fixed} set of hyperparameters across all 3 Mujoco domains. This implies that \bvftpe can select the best hyperparameters for each policy individually, which is an appealing property. We also supplement the empirical results with a theoretical guarantee for selecting $Q^\pi$ out of candidate functions for a fixed $\pi$, which follows from similar proof techniques as \cite[Theorem 2]{xie2020batch}; see Appendix~\ref{app:theory} for the proof. 

\begin{theorem} \label{thm:bvftpe}
Let $C$ be the same as in \cite[Theorem 2]{xie2020batch}, which characterizes the exploratoriness of the data distribution. Consider any policy $\pi$ and candidate $Q$-functions, $\{Q^{l}\}_{l=1}^L$, with $Q^l \in [0, \tfrac{\Rmax}{1-\gamma}]$. Let $(\pi, \hat{Q})$ be the pair that minimizes the empirical version of \bvftpeloss applied to $\{(\pi, Q^l)\}_{l=1}^L$ with $\epsdct = \frac{\epsilon\Rmax}{8\sqrt{C}}$. Then, using a dataset of size $\tilde O\left(\frac{C^2 \ln (L/\delta)}{\epsilon^4 (1-\gamma)^4}\right)$  where $\tilde{O}$ suppresses logarithmic terms, w.p.~$\ge 1-\delta$, 
$\sup_{\nu: \|\nu/\mu\|_\infty \le C} \|\fQ - Q^\pi\|_{2, \nu} \le \epsilon \cdot \frac{\Rmax}{1-\gamma} + \frac{(2 + 4\sqrt{C})\min_{l}\|Q^l - Q^\pi\|_\infty}{1-\gamma}$.
\end{theorem}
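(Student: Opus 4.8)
The plan is to adapt the proof of \cite[Theorem 2]{xie2020batch} by substituting the Bellman optimality operator $\Tcal$ with the policy operator $\Tcal^\pi$, where $(\Tcal^\pi f)(s,a) := R(s,a) + \gamma\EE_{s'\sim P(\cdot|s,a)}[f(s',\pi(s'))]$ and the $\max_{a'}$ inside the projected operator of Eq.\eqref{eq:mspbe} is replaced accordingly as in Eq.\eqref{eq:bvftpeloss}, and by using $Q^\pi$ in place of $Q^\star$ as the target. The reason the substitution goes through is that the one structural property driving \cite{xie2020batch} is preserved: writing $\Tcal^\pi_\Gcal := \Pi^\mu_\Gcal\circ\Tcal^\pi$ with $\Pi^\mu_\Gcal$ the $\|\cdot\|_{2,\mu}$-projection onto a piecewise-constant $\Gcal$, the operator $\Tcal^\pi_\Gcal$ is still a $\gamma$-contraction in $\|\cdot\|_\infty$ mapping $[0,\tfrac{\Rmax}{1-\gamma}]^{\Scal\times\Acal}$ into itself. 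Indeed $\Tcal^\pi$ is a $\gamma$-contraction in $\|\cdot\|_\infty$, and $\Pi^\mu_\Gcal$ replaces each function by its $\mu$-weighted average within each partition, hence is an $\|\cdot\|_\infty$-non-expansion; this is the policy-evaluation twin of Proposition~\ref{prop:gordon}, and since $\Tcal^\pi$ has no $\max$ it is affine, so $\Tcal^\pi_\Gcal$ restricted to $\Gcal$ acts on the vector of partition values as $I-\gamma M$ with $M$ row-stochastic, which makes the $\ell_\infty$-fixed-point reasoning cleaner, not harder.

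I would fix the anchor $l^\star := \argmin_l\|Q^l - Q^\pi\|_\infty$, set $\xi := \|Q^{l^\star}-Q^\pi\|_\infty$, and carry out the three usual ingredients. (i) Complexity: after discretization to resolution $\epsdct$ each candidate takes $\le N = O\!\big(\tfrac{\Rmax}{(1-\gamma)\epsdct}\big)$ distinct values, so each $\Gcal_{i,j}$ has $\le N^2$ partitions; with $\epsdct = \tfrac{\epsilon\Rmax}{8\sqrt{C}}$ this is $O\!\big(\tfrac{C}{\epsilon^2(1-\gamma)^2}\big)$ partitions. (ii) Small loss of the anchor: the discretized $\widetilde Q^{l^\star}$ lies in $\Gcal_{l^\star,j}$ for every $j$ and is within $\xi+\epsdct$ of $Q^\pi$ in $\|\cdot\|_\infty$, so $\|\widetilde Q^{l^\star}-\Tcal^\pi_{\Gcal_{l^\star,j}}\widetilde Q^{l^\star}\|_{2,\mu}\le\|\Pi^\mu_{\Gcal_{l^\star,j}}(\widetilde Q^{l^\star}-\Tcal^\pi\widetilde Q^{l^\star})\|_\infty\le(1+\gamma)(\xi+\epsdct)$, giving population $\bvftpeloss((\pi,\widetilde Q^{l^\star}))\le(1+\gamma)(\xi+\epsdct)$. (iii) Concentration: the empirical $\Tcal^\pi_\Gcal$ is a per-partition empirical average of $r+\gamma Q(s',\pi(s'))$, so a union bound over the $\le L^2$ pairs $(i,j)$ (the $\le N^2$ partitions cost only log factors) together with the fact that the double-sampling bias of the squared loss is $\sim N^2\big(\tfrac{\Rmax}{1-\gamma}\big)^2/n$ shows that, with $n=\tilde O\!\big(\tfrac{C^2\ln(L/\delta)}{\epsilon^4(1-\gamma)^4}\big)$, all empirical $\bvftpeloss$ values are within $\epsilon_{\mathrm{stat}}\lesssim\tfrac{\epsilon\Rmax}{\sqrt{C}}$ of their population counterparts, w.p.~$\ge 1-\delta$. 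Since $(\pi,\fQ)$ minimizes the empirical loss, its empirical $\bvftpeloss$ is $\le$ that of $(\pi,\widetilde Q^{l^\star})$; in particular, taking $j=l^\star$ and writing $\Gcal:=\Gcal_{\hat l,l^\star}$, this yields $\|\fQ-\Tcal^\pi_\Gcal\fQ\|_{2,\mu}\le(1+\gamma)(\xi+\epsdct)+2\epsilon_{\mathrm{stat}}$.

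Finally I would convert this projected-Bellman residual into the value-error bound. Let $Q^\pi_\Gcal$ be the unique fixed point of $\Tcal^\pi_\Gcal$, which exists by the $\gamma$-contraction above. Since $\Tcal^\pi_\Gcal Q^\pi=\Pi^\mu_\Gcal Q^\pi$ and $\widetilde Q^{l^\star}\in\Gcal$ is within $\xi+\epsdct$ of $Q^\pi$, the standard fixed-point estimate gives $\|Q^\pi_\Gcal-Q^\pi\|_\infty\le\tfrac{1}{1-\gamma}\|\Pi^\mu_\Gcal Q^\pi-Q^\pi\|_\infty\le\tfrac{2(\xi+\epsdct)}{1-\gamma}$, which supplies the $\tfrac{2\xi}{1-\gamma}$ part of the claimed bound. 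For any $\nu$ with $\|\nu/\mu\|_\infty\le C$, since $\fQ$ and $Q^\pi_\Gcal$ are both piecewise constant on $\Gcal$'s partition, so is their difference, and we can trade measures for a single factor $\sqrt{C}$: $\|\fQ-Q^\pi_\Gcal\|_{2,\nu}\le\sqrt{C}\,\|\fQ-Q^\pi_\Gcal\|_{2,\mu}$. The remaining and most technical step---imported essentially verbatim from \cite[Theorem 2]{xie2020batch}, which is insensitive to whether the operator is $\Tcal$ or $\Tcal^\pi$---is to bound $\|\fQ-Q^\pi_\Gcal\|_{2,\mu}$ by $\tfrac{1}{1-\gamma}\|\fQ-\Tcal^\pi_\Gcal\fQ\|_{2,\mu}$ while exploiting the piecewise-constant structure (so that $I-\gamma\Pi^\mu_\Gcal P^\pi$ acts on the finite partition graph) and the precise notion of $C$ so as not to incur an extra, contraction-destroying concentrability factor; I expect this to be the main obstacle. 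Combining the pieces, $\|\fQ-Q^\pi\|_{2,\nu}\le\sqrt{C}\,\|\fQ-Q^\pi_\Gcal\|_{2,\mu}+\|Q^\pi_\Gcal-Q^\pi\|_\infty\le\tfrac{\sqrt{C}}{1-\gamma}\big((1+\gamma)(\xi+\epsdct)+2\epsilon_{\mathrm{stat}}\big)+\tfrac{2(\xi+\epsdct)}{1-\gamma}$, and substituting $\epsdct=\tfrac{\epsilon\Rmax}{8\sqrt{C}}$ together with the chosen $n$ folds the $\epsdct$- and $\epsilon_{\mathrm{stat}}$-terms into $\epsilon\cdot\tfrac{\Rmax}{1-\gamma}$ and collapses the $\xi$-terms into $\tfrac{(2+4\sqrt{C})\xi}{1-\gamma}$ (the constants absorbing the slack in the triangle inequalities above and the final step from $\fQ$ to its discretization).
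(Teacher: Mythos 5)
Your overall plan is the same as the paper's: anchor at $l^\star$, bound the anchor's \bvftpeloss by $(1+\gamma)(\xi+\epsdct)$ (this is the paper's Eq.~\eqref{eq:hasgood}, the translated Proposition~5 of \cite{xie2020batch}), concentrate over the $L$ candidates with piecewise-constant classes of $\le (\Vmax/\epsdct)^2$ pieces, invoke the empirical minimizer, and then propagate the projected residual of $\fQ$ into $\|\fQ-Q^\pi\|_{2,\nu}$; your discretization and sample-size accounting ($N^2\Vmax^2/n \lesssim \epsilon_{\mathrm{stat}}^2$, yielding $\tilde O(C^2\epsilon^{-4}(1-\gamma)^{-4})$) matches the paper's, and your observation that the policy-evaluation operator is affine and that the whole argument of \cite{xie2020batch} is insensitive to replacing $\max_{a'}$ by $\pi$ is exactly the paper's stance. (Minor: your parenthetical that the $N^2$ partitions ``cost only log factors'' contradicts your own bias formula; $|\phi|$ enters the sample size linearly, as in Eq.~\eqref{eq:sample_size_phi}, but your final count uses the linear dependence, so this is only a slip of phrasing.)

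The genuine gap is precisely the step you flag as ``the main obstacle,'' and in the form you wrote it down it would fail rather than merely be technical. You first pay $\sqrt{C}$ to pass from $\nu$ to $\mu$ and then hope for a concentrability-free bound $\|\fQ-Q^\pi_\Gcal\|_{2,\mu}\le\tfrac{1}{1-\gamma}\|\fQ-\Tcal^\pi_\Gcal\fQ\|_{2,\mu}$. But $\Tcal^\pi_\Gcal$ is a $\gamma$-contraction only in $\|\cdot\|_\infty$, not in $\|\cdot\|_{2,\mu}$: unrolling $\fQ-Q^\pi_\Gcal=\sum_{k\ge0}(\gamma\,\Pi^\mu_\Gcal P^\pi)^k(\fQ-\Tcal^\pi_\Gcal\fQ)$ and bounding each factor $P^\pi$ by a change of measure gives $(\gamma\sqrt{C})^k$, which diverges whenever $\gamma\sqrt{C}\ge1$, so the inequality you want is not available and cannot be ``imported verbatim'' in that form. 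What \cite{xie2020batch} (and the paper's Appendix~\ref{app:theory}) actually does is different in a way that matters: by Lemma~\ref{lem:Mphi}, $\Tcal^\pi_\Gcal$ is the \emph{exact} Bellman operator of the abstract MDP $M_\phi$, so one unrolls $\fQ-Q^\pi_{M_\phi}=\sum_k\gamma^k (P^{\pi}_{M_\phi})^k(\fQ-\Tcal^\pi_\Gcal\fQ)$ starting from $\nu$ itself, and uses the fact that under Assumption~\ref{asm:con} the one-step pushforward of \emph{any} distribution (under the true dynamics or under $P_\phi$) is pointwise bounded by $C\mu$; hence every term is controlled by $\sqrt{C}\|\fQ-\Tcal^\pi_\Gcal\fQ\|_{2,\mu}$ and the $\sqrt{C}$ is paid once, not compounded. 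This is exactly the content of the translated Proposition~\ref{prop:phi}, Eq.~\eqref{eq:nobad}, which bounds $\|\fQ-Q^\pi\|_{2,\nu}$ directly (absorbing your separate $\|Q^\pi_\Gcal-Q^\pi\|_\infty$ term into the $2\epsphi/(1-\gamma)$ piece); your decomposition through the fixed point $Q^\pi_\Gcal$ plus a prior measure change should be replaced by that argument. With that substitution the rest of your bookkeeping goes through and reproduces the theorem's constants.
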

As the theorem implies, when one of $\{Q^l\}_{l=1}^L$ is a good approximation of $Q^\pi$ (i.e., $\min_l \|Q^l -Q^\pi\|_{\infty}$ is small), \bvftpe is able to identify $\hat{Q} \approx Q^\pi$ 
with a polynomial sample complexity. 


\section{Discussion and Conclusion}
\label{sec:discuss}
We present \bvft and its variants based on recent theoretical advances \cite{xie2020batch}, which are (nearly) hyperparameter-free algorithms for policy selection in offline RL and empirically effective in discrete-action benchmarks. When combined with OPE algorithms such as FQE, variants of \bvft are also competitive in continuous-action benchmarks, and such a combination addresses the weaknesses of \bvft (relying on the existence of good value functions among the candidates) and those of OPE (having untunable hyperparameters) and gets the best of both worlds. 

We conclude the paper with discussions of the limitations of our approach and open questions:

\para{Sample Efficiency} Section~\ref{sec:bvftpe} uses OPE to fit $Q^\pi$, which can be data intensive. A plausible solution is to re-use the training data for OPE, as suggested by \cite{paine2020hyperparameter}. 
However, data reuse can cause serious issues in realworld domains where the  amount of training data taken for granted in deep RL is not available. 
How to improve the sample efficiency of \bvft + OPE is an important open question.

\para{Computational Complexity} The $O(m^2)$ complexity of \bvft can be prohibitive if we wish to compare hundreds of models. A plausible solution is divide-and-conquer, i.e., eliminating bad models by running \bvft in smaller groups. 
It will be interesting to see if this compromises performance. 

\para{Applicability}
It is important to understand what type of domains \bvft is particularly suited for. Despite having provided partial answers (e.g., the discussion of discrete actions vs.~continuous actions), we still need a more thorough answer based on comprehensive evaluation across more diverse domains and comparison to a wider range of baselines, which is beyond the scope of this paper since we focus on algorithm development and some of our contributions are orthogonal to existing approaches (e.g., \bvft + OPE). We look forward to investigating this question empirically in the future with the help of the public benchmarks that have become available recently \cite{fu2021benchmarks}.

\para{Data with Insufficient Coverage} While results in Appendix~\ref{app:exp} show that \bvft is insensitive to moderate changes in data exploratoriness, we will likely need to incorporate some form of pessimism--which is shown to be important for training \cite{liu2019off, liu2020provably, kidambi2020morel, jin2020pessimism, rashidinejad2021bridging, xie2021bellman}---when the data coverage is seriously lacking. This can be, however, quite challenging, as \bvft uses a dynamic function space for projection ($\Gcal_{i,j}$) that varies from candidate to candidate, and states considered covered due to generalization effects under one function space may be considered lacking data under a different one. How to resolve this issue and design a pessimistic version of \bvft is an interesting question.




\begin{ack}
Nan Jiang acknowledges funding support from the ARL Cooperative Agreement W911NF-17-2-0196, NSF IIS-2112471, and Adobe Data Science Research Award.
\end{ack}

\printbibliography 

\newpage 
\appendix
\label{appendix}
\section{Algorithm Details} \label{app:alg}
\begin{algorithm}[H]
\SetAlgoLined
\KwIn{Dataset $D$, candidate functions $\{Q_i\}_{i=1}^m$, discretization parameter set $\mathcal{R} = \{\epsilon_{k}\}$}
\For{$\epsilon_k \in \mathcal{R}$}{
\For{$i=1$ to $m$}{
$\overline{Q_i} \leftarrow$ discretize the output of $Q_i$ with resolution $\epsilon_{k}$.
}
\For{$i =1$ to $m$}{
 \For{$j = 1$ to $m$}{
  Define $\phi_{i,j}$ as a partitioning of $\Scal\times\Acal$ s.t.~$\phi(s,a) = \phi (s', a')$ iff $\overline{Q_i}(s, a) = \overline{Q_i}(s', a')$ and $\overline{Q_j}(s, a) = \overline{Q_j}(s', a')$\;
  Let $\mathcal{G}_{i,j}$ be the piecewise constant function class induced by $\phi_{i,j}$\;
  $\hat{\mathcal{T}}_{\Gcal_{i,j}} Q := \argmin_{g\in \mathcal{G}_{i,j}} \frac{1}{|D|} \sum_{(s,a,r,s')\in D} [(g(s,a) - r - \gamma \max_{a'} Q(s', a'))^2]$ \;
  $\mathcal{E}_{\epsilon_k}(Q_i ; Q_j) \leftarrow \|Q_i - \hat{\mathcal{T}}_{\Gcal_{i,j}} Q_i\|_{2,D}$ \quad {\color{blue} \# $\|f\|_{2, D}^2 := \frac{1}{|D|} \sum_{(s,a)\in D} f(s,a)^2$} \;
 }
 $\Ecal_{\epsilon_k}(Q_i) \leftarrow \max_{j}\, \mathcal{E}_{\epsilon_k}(Q_i; Q_j)$\;
 }
 }
 \KwOut{Sort $\{Q_i\}_{i=1}^m$ in ascending order according to $\min_{k} \Ecal_{\epsilon_k}(Q_i)$} 
 \caption{\bvft with Automated Resolution Selection \label{alg:bvft}}
\end{algorithm}
\para{Remark on Footnote~\ref{ft:largedct}} As Footnote~\ref{ft:largedct} mentioned, there is a corner case where our resolution selection rule fails: when $\epsdct = \infty$, a constant $Q$ that predicts $\EE_{D}[r]/(1-\gamma)$ can have $0$ \bvftloss. This can be prevented in theory by adding a penalty term $ \propto \epsdct$, though in practice we never observe such a degenerate behavior  and hence did not include the penalty in our implementation. 

\subsection{Details of \bvftpeq} \label{app:bvftpeq}
Here we provide the details about \bvftpeq. As mentioned in Section~\ref{sec:bvftpe},  when $\pi_i \ne \pi_{Q_i}$, it is possible that $Q_i = Q^{\pi_i}$ but $\pi_i$ itself is a poor policy, in which case $\bvftpeloss((\pi_i, Q_i))$ is still $0$. We address this issue by taking inspiration from a telescoping identity commonly used in the OPE literature \cite{uehara2020minimax}: $\forall (\pi, Q)$, let $d^\pi$ be the normalized discounted state-action occupancy, then
$$J(\pi) = \EE_{s\sim d_0} [Q(s, \pi(s))] - \tfrac{1}{1-\gamma}\EE_{d^\pi}[Q - \Tcal^\pi Q].$$
This tells us that we can use \textit{any} $Q$ to evaluate $\pi$, as long as we penalize its prediction $\EE_{s\sim d_0} [Q(s, \pi(s))]$ with the Bellman error of $Q$ w.r.t.~$\pi$. Inspired by this, we consider $\bvftpeq((\pi_i, Q_i); \{(\pi_j, Q_j)\}_{j=1}^m) = \bvftpeloss - \lambda \EE_{\mu}[Q_i]$, where \bvftpeloss is a form of Bellman error w.r.t.~$\pi$, and $\EE_{\mu}[Q_i]$ (which is what AvgQ uses to rank the $Q_i$'s) is a variant of $\EE_{s\sim d_0} [Q(s, \pi(s))]$. We are ranking policies in the descending order according to the loss, which is why the signs are the opposite to the telescoping identity. Intuitively, this avoids the degenerate issue because when $Q = Q^\pi$ for a poor $\pi$, $\bvftpeq((\pi_i, Q_i)) = - \lambda \EE_{\mu}[Q^\pi]$; instead of receiving a $0$ loss as in \bvftpe, the policy is still penalized for having a low $Q^\pi$. This new loss is effectively a linear combination of AvgQ and \bvftpe, and $\lambda$ is an additional hyperparameter that determines their relative contributions. 

\section{Proofs} \label{app:theory}
\subsection{Proof Sketch of Proposition~\ref{prop:bvft}} \label{app:bvftproof}
The basic idea is that $\Gcal_{i,j}$ is piecewise-constant by design, and 
satisfies $Q^\star \in \Gcal_{i,j}$ when $i^\star \in \{i,j\}$. Based on Proposition~\ref{prop:gordon}, this implies that whenever $i^\star \in \{i,j\}$, $Q=Q^\star \Leftrightarrow \|Q - \Tcal_{\Gcal_{i,j}} Q \|_{2, \mu}=0$. Then, to show that $Q_i = Q^\star \Leftrightarrow \bvftloss(Q_i; \{Q_j\}_{j=1}^m)=0$, it suffices to show that $Q_i=Q^\star \Rightarrow \bvftloss(Q_i) = 0$ and $Q_i\ne Q^\star \Rightarrow \bvftloss(Q_i) \ne 0$ :
\begin{itemize}[leftmargin=*]
\item When $Q_i= Q^\star$, $Q^\star \in \Gcal_{i,j}$ for any $j$, so $\bvftloss(Q_i) = \bvftloss(Q^\star) = 0$.
\item When $Q_i \ne Q^{\star}$, \begin{align*}
\bvftloss(Q_i) = &~ \max_{j} \|Q_i - \Tcal_{\Gcal_{i,j}} Q_i\|_{2, \mu} \\
\ge &~ \|Q_i - \Tcal_{\Gcal_{i,i^\star}} Q_i\|_{2, \mu} \ne 0 \tag{$Q^\star \in \Gcal_{i, i^\star}$}.
\end{align*}
\end{itemize}

\subsection{Proof of Theorem~\ref{thm:bvftpe}}

In this section we provide a full proof of Theorem~\ref{thm:bvftpe}. We start by quoting \cite{xie2020batch}'s Assumption 1 on the data exploratoriness, which Theorem~\ref{thm:bvftpe} relies on. We will also be able to reuse a number of their lemmas and propositions, due to the insensitivity of those results to the difference between $Q^\star$ and $Q^\pi$. That is, the proof holds literally by replacing any $\max_{a'}Q(s',a')$ term with $Q(s', \pi(s'))$ (or $\EE_{a'\sim \pi(\cdot|s')}[Q(s', a')]$ if $\pi$ is stochastic), $Q^\star$ with $Q^\pi$, and $V^\star$ with $V^\pi$, in which case we say the proof ``translates from $Q^\star$ to $Q^\pi$''. Also note that their function class $\Fcal$ corresponds to our $\{Q^l\}_{l=1}^L$. 

\begin{assumption} \label{asm:con}
We assume that $\mu(s,a) > 0~ \forall s,a$. 
We further assume that \\
(1) There exists constant $1 \le \CA< \infty$ such that for any $s\in\Scal, a\in\Acal$, $\mu(a|s) \ge 1/\CA$. \\
(2) There exists constant $1 \le \CS < \infty$ such that  for any $s\in\Scal, a\in\Acal, s'\in\Scal$, $P(s'|s,a)  / \mu(s') \le \CS$. Also $d_0(s) / \mu(s) \le \CS$. \\
It will be  convenient to define $C = \CS \CA$. 
\end{assumption}

\begin{lemma}[Lemma 3 of \cite{xie2020batch} translated to $Q^\pi$] \label{lem:Mphi}
Let $\phi$ be a partitioning of $\Scal\times\Acal$, such that $\phi(s,a) = \phi(\ts, \ta)$ means that $(s,a)$ and $(\ts, \ta)$ are in the same partition. Define $M_\phi = (\Scal, \Acal, P_\phi, R_\phi, \gamma, d_0)$, where  
\begin{align*}
& R_{\phi}(s,a) = \frac{\sum_{\ts, \ta: \phi(\ts, \ta) = \phi(s,a)} \mu(\ts, \ta) R(\ts, \ta)}{\sum_{\ts, \ta: \phi(\ts, \ta) = \phi(s,a)} \mu(\ts, \ta) }. \\
& P_{\phi}(s'|s,a) = \frac{\sum_{\ts, \ta: \phi(\ts, \ta) = \phi(s,a)} \mu(\ts, \ta) P(s'|\ts, \ta)}{\sum_{\ts, \ta: \phi(\ts, \ta) = \phi(s,a)} \mu(\ts, \ta) }.
\end{align*}
Then $\Tcal_\Gcal^\pi$ for the piecewise-constant class $\Gcal$ induced by $\phi$ is the Bellman operator for policy $\pi$ in MDP $M_\phi$. 
\end{lemma}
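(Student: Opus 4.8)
The plan is to unfold the definition of $\Tcal^\pi_\Gcal$ as a $\mu$-weighted least-squares projection onto the piecewise-constant class $\Gcal$, observe that this projection decouples across the cells of the partition $\phi$, solve the resulting scalar least-squares problem in each cell, and then verify that the answer coincides cell-by-cell with the exact policy Bellman backup in the aggregated MDP $M_\phi$. The translation from \cite{xie2020batch}'s $Q^\star$-version to the $Q^\pi$-version is purely cosmetic: every occurrence of $\max_{a'}Q(s',a')$ is replaced by $Q(s',\pi(s'))$ (or $\EE_{a'\sim\pi(\cdot|s')}[Q(s',a')]$ if $\pi$ is stochastic), and this quantity enters only as a fixed target inside the squared loss, so none of the projection algebra changes.

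First I would write, for the policy version of Eq.~\eqref{eq:mspbe},
$$\Tcal^\pi_\Gcal Q = \argmin_{g\in\Gcal}\EE_\mu\big[(g(s,a) - R(s,a) - \gamma Q(s',\pi(s')))^2\big],$$
and expand the expectation over $(s,a)\sim\mu$ and $s'\sim P(\cdot|s,a)$ (the reward being deterministic under the offline protocol). Since $g\in\Gcal$ is constant on each cell of $\phi$, the objective splits into an independent sum over cells: writing $g\equiv c_B$ on a cell $B$, the contribution of $B$ is a sum with weights $\mu(s,a)$ of $\EE_{s'}[(c_B - R(s,a) - \gamma Q(s',\pi(s')))^2]$ over $(s,a)\in B$. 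This is the crux of the argument: the $\mu$-weighted $L_2$ projection onto piecewise-constant functions is simply the $\mu$-weighted conditional mean of the Bellman target within each cell.

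Next I would solve each per-cell problem by setting the derivative in $c_B$ to zero, obtaining
$$c_B = \frac{\sum_{(\ts,\ta)\in B}\mu(\ts,\ta)\big[R(\ts,\ta) + \gamma\,\EE_{s'\sim P(\cdot|\ts,\ta)}[Q(s',\pi(s'))]\big]}{\sum_{(\ts,\ta)\in B}\mu(\ts,\ta)}.$$
Separately I would compute the exact policy Bellman operator in $M_\phi$, namely $(\Tcal^\pi_{M_\phi}Q)(s,a) = R_\phi(s,a) + \gamma\sum_{s'}P_\phi(s'|s,a)Q(s',\pi(s'))$, and substitute the definitions of $R_\phi$ and $P_\phi$ from the lemma statement. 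Because $R_\phi$ and $P_\phi$ are precisely the same $\mu$-weighted averages over the cell $B=\phi(s,a)$, the transition term linearizes as $\sum_{s'}P_\phi(s'|s,a)Q(s',\pi(s')) = \frac{\sum_{(\ts,\ta)\in B}\mu(\ts,\ta)\EE_{s'\sim P(\cdot|\ts,\ta)}[Q(s',\pi(s'))]}{\sum_{(\ts,\ta)\in B}\mu(\ts,\ta)}$, and the expression matches $c_B$ term for term with the shared denominator cancelling. Hence $(\Tcal^\pi_\Gcal Q)(s,a) = (\Tcal^\pi_{M_\phi}Q)(s,a)$ for every $(s,a)$ and every $Q$, which is the claim.

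The main obstacle is bookkeeping rather than conceptual: one must track that the $\mu(s,a)$ weights survive correctly through both the projection and the definitions of $R_\phi,P_\phi$, and that the cell denominators are nonzero. The latter is guaranteed because $\mu$ is fully supported (Assumption~\ref{asm:con}), so every nonempty cell has positive mass; this also makes the per-cell objective strictly convex in $c_B$, so its stationary point is the unique global minimizer and $\Tcal^\pi_\Gcal Q$ is well-defined. I would state this convexity/uniqueness remark explicitly, since it is the only place where the hypothesis on $\mu$ is used.
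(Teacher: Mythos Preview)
Your proposal is correct. The paper does not supply its own proof of this lemma; it merely states the result as a direct translation of \cite{xie2020batch}'s Lemma~3, noting elsewhere that such translations amount to replacing $\max_{a'}Q(s',a')$ by $Q(s',\pi(s'))$ throughout. Your argument---decoupling the $\mu$-weighted least-squares projection across cells, solving the per-cell scalar problem to get the $\mu$-conditional mean of the Bellman target, and matching it against $R_\phi+\gamma P_\phi$---is exactly the standard derivation one would find in \cite{xie2020batch}, so there is no substantive difference to compare.
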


\begin{proposition}[Proposition 1 and Lemma 6 of \cite{xie2020batch}]
Let $\nu$ be a distribution over $\Scal\times\Acal$ and $\pi$ be a policy. Let $\nu' = P(\nu) \times \pi$ denote the distribution specified by the generative process $(s',a') \sim \nu' \Leftrightarrow (s,a) \sim \nu, s'\sim P(\cdot|s,a), a' = \pi(s')$. Under Assumption~\ref{asm:con}, we have $\|\nu'/\mu\|_\infty := \max_{s, a} \nu'(s,a)/\mu(s,a) \le C$.  Also note that $\|(d_0 \times \pi) / \mu\|_\infty \le C$. Furthermore, the same is true when the MDP dynamics is replaced by $M_\phi$ for any $\phi$. 
\end{proposition}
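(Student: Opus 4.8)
The plan is to expand $\nu'$ according to its generative process and bound the resulting likelihood ratio by factorizing $\mu(s',a') = \mu(s')\,\mu(a'\mid s')$. Since $(s,a)\sim\nu$, $s'\sim P(\cdot\mid s,a)$, and $a' = \pi(s')$, the induced density is $\nu'(s',a') = \pi(a'\mid s')\sum_{s,a}\nu(s,a)P(s'\mid s,a)$, where $\pi(a'\mid s') = \Indi[a'=\pi(s')]$ in the deterministic case. Writing $\nu_P(s') := \sum_{s,a}\nu(s,a)P(s'\mid s,a)$ for the state marginal, I would split the target ratio into a state factor and an action factor:
\[
\frac{\nu'(s',a')}{\mu(s',a')} = \frac{\nu_P(s')}{\mu(s')}\cdot\frac{\pi(a'\mid s')}{\mu(a'\mid s')}.
\]
This decomposition is the crux, and each factor is then bounded by one of the two constants in $C = \CS\CA$.

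For the state factor, I would write $\tfrac{\nu_P(s')}{\mu(s')} = \sum_{s,a}\nu(s,a)\tfrac{P(s'\mid s,a)}{\mu(s')}$ and apply part (2) of Assumption~\ref{asm:con}, which gives $P(s'\mid s,a)/\mu(s')\le\CS$ pointwise; since $\nu$ is a distribution, $\sum_{s,a}\nu(s,a) = 1$ and the whole sum is at most $\CS$. For the action factor, I would use $\pi(a'\mid s')\le 1$ together with part (1), $\mu(a'\mid s')\ge 1/\CA$, to conclude $\pi(a'\mid s')/\mu(a'\mid s')\le\CA$. Multiplying yields $\nu'(s',a')/\mu(s',a')\le\CS\CA = C$, and taking the max over $(s',a')$ gives $\maxnorm{\nu'/\mu}\le C$. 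The $d_0\times\pi$ claim follows by the same two-factor argument, replacing $\nu_P(s')$ with $d_0(s)$ and invoking the $d_0(s)/\mu(s)\le\CS$ half of part (2) in place of the transition bound.

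For the $M_\phi$ case, the only quantity that changes is the transition kernel, which becomes the $\mu$-weighted within-partition average $P_\phi$ of Lemma~\ref{lem:Mphi}; since $\mu$, $d_0$, and $\pi$ are unchanged, it suffices to verify that $P_\phi$ inherits the state-concentrability bound. I would observe that $P_\phi(s'\mid s,a)/\mu(s') = \big(\sum_{\ts,\ta:\,\phi(\ts,\ta)=\phi(s,a)}\mu(\ts,\ta)\,P(s'\mid\ts,\ta)/\mu(s')\big)\big/\big(\sum_{\ts,\ta:\,\phi(\ts,\ta)=\phi(s,a)}\mu(\ts,\ta)\big)$ is a convex combination of terms each bounded by $\CS$, hence itself at most $\CS$; the state and action factors then bound exactly as before. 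The argument is essentially routine; the only point demanding care is this last inheritance step---confirming that aggregating transitions into $M_\phi$ cannot inflate the concentrability coefficient---which holds precisely because $P_\phi$ is a $\mu$-weighted average and a weighted average of quantities bounded by $\CS$ stays bounded by $\CS$.
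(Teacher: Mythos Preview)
Your proof is correct and is the natural direct verification. Note, however, that the paper does not actually supply a proof for this proposition: it is quoted verbatim from \cite{xie2020batch} (their Proposition~1 and Lemma~6) and used as a black box in the proof of Theorem~\ref{thm:bvftpe}. Your argument---factor $\nu'/\mu$ into a state ratio bounded by $\CS$ via Assumption~\ref{asm:con}(2) and an action ratio bounded by $\CA$ via Assumption~\ref{asm:con}(1), then observe that $P_\phi$ is a $\mu$-weighted convex combination of the original transitions so it inherits the same $\CS$ bound---is exactly how one would prove the cited result, and nothing is missing.
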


\begin{proposition}[Proposition 5 of \cite{xie2020batch} translated to $Q^\pi$] \label{prop:phi}
Given any $\phi$, define $\epsphi := \min_{g\in\Gcal} \|g - Q^\pi\|_\infty$, where $\Gcal$ is the piecewise-constant class induced by $\phi$. 
Fixing any $\epsd, \epst$. Suppose the dataset $D$ has size
\begin{align}\label{eq:sample_size_phi}
|D| \ge \frac{32\Vmax^2 |\phi|\ln\frac{8\Vmax}{\epst \delta}}{\epst^2} + \frac{50 \Vmax^2 |\phi| \ln{\frac{80 \Vmax}{\epsd \delta}}}{\epsd^2},
\end{align}
where $|\phi|$ is the number of partitions in $\phi$ and $\Vmax = \Rmax/(1-\gamma)$. 
Then, with probability at least $1-\delta$, 
for any $\nu\in \Delta(\Scal\times\Acal)$ such that $\|\nu/\mu\|_{\infty} \le C$,  
\begin{align} \label{eq:nobad}
\textstyle \|\fo - Q^\pi \|_{2,\nu} \le \frac{2\epsphi + \sqrt{C}(\|\fo - \eTG \fo \|_{2, D} + \epsd + \epst)}{1-\gamma},
\end{align}
where $\eTG$ is the empirical estimation of $\Tcal_\Gcal$ based on dataset $D$. 
At the same time, 
\begin{align} \label{eq:hasgood}
\|\fo - \eTG \fo\|_{2, D} \le (1+\gamma)\|\fo - Q^\star\|_\infty + 2\epsphi + \epst + \epsd. 
\end{align}
\end{proposition}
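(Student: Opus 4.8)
The plan is to reduce Theorem~\ref{thm:bvftpe} to the per-partition engine of Proposition~\ref{prop:phi}, and to couple the data-dependent selected index with a fixed benchmark index through a shared partition. First I would fix the reference candidate $l^\star := \argmin_l \|Q^l - Q^\pi\|_\infty$, set $\epsF := \min_l \|Q^l - Q^\pi\|_\infty = \|Q^{l^\star} - Q^\pi\|_\infty$, and write $\fQ = Q^{\hat l}$ for the minimizer of the empirical \bvftpeloss. The key observation is that whenever $l^\star \in \{i,j\}$, the partition $\phi_{i,j}$ refines the level sets of the $\epsdct$-discretized $\overline{Q^{l^\star}}$, so $\overline{Q^{l^\star}} \in \Gcal_{i,j}$ and hence $\epsphi := \min_{g \in \Gcal_{i,j}} \|g - Q^\pi\|_\infty \le \|\overline{Q^{l^\star}} - Q^{l^\star}\|_\infty + \epsF \le \epsdct + \epsF$. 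Thus every partition I will touch has approximation error at most $\epsdct + \epsF$.

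Second, since each $Q^i$ takes at most $O(\Vmax/\epsdct)$ discretized values, every $\Gcal_{i,j}$ is induced by $|\phi_{i,j}| = O((\Vmax/\epsdct)^2)$ partitions. I would instantiate Proposition~\ref{prop:phi} with failure probability $\delta / O(L^2)$ for each of the $O(L^2)$ pairs $(i,j)$ and take a union bound, so that Eq.\eqref{eq:nobad} and Eq.\eqref{eq:hasgood} hold simultaneously over all pairs---in particular over the random realized pair $(\hat l, l^\star)$. On this good event, Eq.\eqref{eq:hasgood} (translated to $Q^\pi$) applied to $\fo = Q^{l^\star}$ on each $\phi_{l^\star, j}$ gives $\|Q^{l^\star} - \widehat{\Tcal}^\pi_{\Gcal_{l^\star, j}} Q^{l^\star}\|_{2,D} \le (1+\gamma)\epsF + 2(\epsdct + \epsF) + \epst + \epsd$ uniformly in $j$.

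The heart of the argument is the coupling. Because $\fQ$ minimizes the empirical \bvftpeloss and the term indexed by $j = l^\star$ is one of the terms inside $\fQ$'s own maximum,
$$\|\fQ - \widehat{\Tcal}^\pi_{\Gcal_{\hat l, l^\star}} \fQ\|_{2,D} \le \max_j \|Q^{l^\star} - \widehat{\Tcal}^\pi_{\Gcal_{l^\star, j}} Q^{l^\star}\|_{2,D} \le (1+\gamma)\epsF + 2(\epsdct + \epsF) + \epst + \epsd.$$
I would then apply Eq.\eqref{eq:nobad} to $\fo = \fQ$ with the partition $\phi_{\hat l, l^\star}$ (whose $\epsphi \le \epsdct + \epsF$), substitute the displayed empirical-loss bound, and collect terms. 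The $\epsF$ contributions total $2\epsF + \sqrt{C}(3+\gamma)\epsF \le (2 + 4\sqrt{C})\epsF$ via $3 + \gamma \le 4$, producing the theorem's second term after dividing by $1-\gamma$; choosing $\epsdct = \epst = \epsd = \tfrac{\epsilon\Rmax}{8\sqrt{C}}$ makes the remaining terms $2\epsdct + \sqrt{C}(2\epsdct + 2\epst + 2\epsd) \le \epsilon\Rmax$ (using $C \ge 1$), giving the first term $\epsilon \cdot \tfrac{\Rmax}{1-\gamma}$. Since Eq.\eqref{eq:nobad} holds for all $\nu$ with $\|\nu/\mu\|_\infty \le C$, I obtain the $\sup_\nu$ bound. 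Finally I would read the sample size off Eq.\eqref{eq:sample_size_phi}: with $|\phi_{i,j}| = O(C/(\epsilon^2(1-\gamma)^2))$ and $\Vmax^2/\epst^2 = O(C/(\epsilon^2(1-\gamma)^2))$, and the $O(L^2)$ union bound folded into the logarithm, the requirement becomes $\tilde O(C^2 \ln(L/\delta)/(\epsilon^4(1-\gamma)^4))$.

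I expect the main obstacle to be handling the data-dependent index $\hat l$: the partition $\phi_{\hat l, l^\star}$ is random, so Proposition~\ref{prop:phi} cannot be applied to it in isolation, and I must first secure uniform validity of Eq.\eqref{eq:nobad}/\eqref{eq:hasgood} over all $O(L^2)$ partitions and then argue that the single $j = l^\star$ term inside $\fQ$'s maximum, combined with $\fQ$'s minimizing property, transfers the provable smallness of $Q^{l^\star}$'s loss to $\fQ$. The translation of the quoted lemmas from $Q^\star$ to $Q^\pi$ is mechanical, and the remaining constant bookkeeping is routine once this coupling is in place.
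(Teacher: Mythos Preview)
Your proposal does not prove Proposition~\ref{prop:phi}; it \emph{uses} Proposition~\ref{prop:phi} as a black box and proves Theorem~\ref{thm:bvftpe} instead. The statement you were asked to establish is the per-partition guarantee: for a fixed $\phi$ and fixed $f_0$, with enough data, Eq.\eqref{eq:nobad} and Eq.\eqref{eq:hasgood} hold. This requires (i) concentration of the empirical projected Bellman operator $\eTG$ around its population counterpart $\Tcal_\Gcal^\pi$, (ii) the identification of $\Tcal_\Gcal^\pi$ as the Bellman evaluation operator of the aggregated MDP $M_\phi$ (Lemma~\ref{lem:Mphi}), and (iii) an error-propagation argument in $M_\phi$ that controls $\|f_0 - Q^\pi\|_{2,\nu}$ by the one-step projected Bellman error plus the approximation error $\epsphi$. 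None of these steps appear in your write-up; you invoke Eq.\eqref{eq:nobad} and Eq.\eqref{eq:hasgood} as given.

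The paper's proof of Proposition~\ref{prop:phi} is a translation argument: it points to the proof of Proposition~5 in \cite{xie2020batch} and verifies that every ingredient survives when $\max_{a'} Q(s',a')$ is replaced by $Q(s',\pi(s'))$ and $Q^\star$ by $Q^\pi$. Concretely, the concentration lemmas (their Lemmas~8--10) only use boundedness of the bootstrapping target and are unchanged, and the error-propagation step simplifies because in policy evaluation one only tracks distributions generated by the single policy $\pi$ rather than the family $\pi_{f,f'}$ needed for optimality. That is the content you should be supplying.

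For what it is worth, the argument you did write---the union bound over $O(L^2)$ partitions, the coupling of $\hat l$ with $l^\star$ through the shared partition $\phi_{\hat l, l^\star}$, and the constant bookkeeping---is essentially the paper's proof of Theorem~\ref{thm:bvftpe} and is carried out correctly. But it is an answer to a different question.
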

\begin{proof}
This is a central result of \cite{xie2020batch}'s proof and we show that the entire proof translates to $Q^\pi$. In particular, the result relies on a number of lemmas in \cite{xie2020batch}, among which 
\begin{itemize}[leftmargin=*, itemsep=0pt]
\item Lemmas 6 (quoted above), 8, and 10 (two concentration bounds) are independent of $Q^\star$ or $Q^\pi$ and can be used without modification.
\item Lemma 9 provides concentration bound for $\emp\Tcal_\Gcal f$ for a fixed $f$, and only uses the boundedness of $V_f(s') := \max_{a'} f(s',a')$. The same bound holds for $\eTG$ when we replace $V_f$ with $f(s', \pi(s'))$.
\end{itemize}
Finally, the proof of the proposition itself translates when we replace $\pi_{f,f'}$ in their proof with simply $\pi$. This is because error propagation in policy evaluation is much simpler than that in policy optimization \cite{farahmand2010error, chen2019information}, so we only need to be concerned with the distributions generated by $\pi$ instead of various other policies  (see e.g., Appendix I of \cite{uehara2021finite}). \end{proof}

The final lemma we need is:
\begin{lemma}[Lemma 11 of \cite{xie2020batch} translated to $Q^\pi$]
Let $\phi$ be the partitioning of $\Scal\times\Acal$ that induces  $\Gcal_{i,j}$ in \bvftpe, satisfies $|\phi| \le (\Vmax/\epsdct)^2$. Let $l^\star := \argmin_{l} \|Q^l - Q^\pi\|_\infty$. When $l^\star \in \{i, j\}$, we further have $\epsphi \le \epsdct + \min_{l} \|Q^l - Q^\pi\|_\infty$.
\end{lemma}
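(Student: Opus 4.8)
The plan is to establish the two assertions of the lemma separately --- the cardinality bound $|\phi| \le (\Vmax/\epsdct)^2$ and the approximation bound $\epsphi \le \epsdct + \min_l \|Q^l - Q^\pi\|_\infty$ --- both of which follow purely from the discretize-then-refine construction of $\Gcal_{i,j}$ in Algorithm~\ref{alg:bvft}. A point worth emphasizing at the outset is that neither argument ever invokes the target value function, so the statement translates verbatim from \cite{xie2020batch} upon replacing $Q^\star$ by $Q^\pi$; the whole content concerns the geometry of the partition, not what it is approximating.

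For the cardinality bound, I would begin from the fact that each candidate $Q^l$ takes values in $[0,\Vmax]$, so after discretizing its output at resolution $\epsdct$ the rounded function $\overline{Q^l}$ attains at most $\Vmax/\epsdct$ distinct values (one per discretization level). By the definition in Algorithm~\ref{alg:bvft}, $\phi \equiv \phi_{i,j}$ assigns $(s,a)$ and $(\ts,\ta)$ to the same cell iff they agree on both $\overline{Q_i}$ and $\overline{Q_j}$; that is, $\phi$ is the common refinement of the two level-set partitions induced by $\overline{Q_i}$ and $\overline{Q_j}$. The common refinement of two partitions with at most $\Vmax/\epsdct$ cells each has at most the product $(\Vmax/\epsdct)^2$ cells, which is the claim.

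For the approximation bound, the idea is to produce a single explicit member of $\Gcal_{i,j}$ that tracks $Q^\pi$. Using the hypothesis $l^\star \in \{i,j\}$, say $l^\star = i$ without loss of generality, I would take the witness $g = \overline{Q^{l^\star}} = \overline{Q_i}$ and verify two things. First, $g \in \Gcal_{i,j}$: since any two points in the same $\phi_{i,j}$-cell agree on $\overline{Q_i}$ by construction, $\overline{Q_i}$ is constant on every cell of $\phi_{i,j}$ and hence lies in the piecewise-constant class $\Gcal_{i,j}$ (and it remains in $[0,\Vmax]$ because rounding preserves the range). Second, by the triangle inequality, $\|g - Q^\pi\|_\infty \le \|\overline{Q^{l^\star}} - Q^{l^\star}\|_\infty + \|Q^{l^\star} - Q^\pi\|_\infty$, where the first term is the coordinatewise rounding error bounded by $\epsdct$ and the second equals $\min_l \|Q^l - Q^\pi\|_\infty$ by the definition of $l^\star$. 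Since $\epsphi = \min_{g'\in\Gcal_{i,j}} \|g' - Q^\pi\|_\infty$ is an infimum, plugging in this particular $g$ delivers $\epsphi \le \epsdct + \min_l \|Q^l - Q^\pi\|_\infty$.

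Neither step is technically deep, and I expect the only delicate point to be the membership claim $g \in \Gcal_{i,j}$, where the hypothesis $l^\star \in \{i,j\}$ is genuinely used: if $l^\star$ were an index other than $i$ or $j$, the discretized function $\overline{Q^{l^\star}}$ would in general fail to be constant on the cells of $\phi_{i,j}$, the witness would leave $\Gcal_{i,j}$, and the bound could break. I therefore regard pinning down this membership --- together with making explicit that the construction is indifferent to $Q^\star$ versus $Q^\pi$, which is precisely what licenses the translation --- as the conceptual crux, with the surrounding inequalities being routine.
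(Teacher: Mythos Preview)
Your proposal is correct and follows exactly the natural argument: the cardinality bound comes from the common refinement of two level-set partitions with at most $\Vmax/\epsdct$ levels each, and the approximation bound comes from taking $\overline{Q^{l^\star}}$ as a witness in $\Gcal_{i,j}$ (which is where the hypothesis $l^\star\in\{i,j\}$ is used) and applying the triangle inequality. The paper itself does not give a proof---it merely states the result as Lemma~11 of \cite{xie2020batch} translated to $Q^\pi$---and your observation that neither step touches the target value function is precisely what justifies that translation.
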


\begin{proof}[\textbf{Proof of Theorem~\ref{thm:bvftpe}}]
The majority of the proof of \cite{xie2020batch}'s Theorem 2 translates, except for the final part where they use $\|\hat Q - Q^\star\|$ (our $\hat Q$ is their $\hat f$) to bound the suboptimality of the greedy policy of $\hat Q$. However, since we are only concerned about policy evaluation, we are not interested in this part that does not translate. What is useful to us is an intermediate result that translates: when the sample size is set to 
\begin{align} \label{eq:intermediate_sample_size}
|D| \ge \frac{82\Vmax^4 \ln\frac{160\Vmax L}{\epst \delta}}{\epst^2 \epsdct^2},
\end{align}
with probability at least $1-\delta$, for any $\nu$ s.t.~$\|\nu/\mu\|_\infty \le C$,
\begin{align*}
\|\fQ - Q^\pi \|_{2,\nu} 
\le &~ \frac{(2+4\sqrt{C})\epsF + 4\sqrt{C}(\epsdct + \epst)}{1-\gamma}.
\end{align*}
To guarantee that $\frac{4\sqrt{C}(\epsdct + \epst)}{1-\gamma} \le \epsilon \Vmax$, we set $\epsdct = \epst = \frac{\epsilon\Rmax}{8\sqrt{C}}$. Plugging this back into Eq.\eqref{eq:intermediate_sample_size} yields the sample complexity in the theorem statement.
\end{proof}

\section{Experiment Setup Details}
\label{app:exp_setup}

\subsection{Environments}

\paragraph{Atari Games}
Atari games (or formally the Arcade Learning Environment \cite{bellemare2013arcade}) are a set of arcade games used for benchmarking RL algorithms. The observations are raw-pixels and the action space is finite. 
We select 5 (Pong, Breakout, Asterix, Seaquest, and Space Invader) commonly used environments for our experiments based on their different characteristics. The pixel observation is resized to an 84 $\times$ 84 image per frame and with a frameskip of 4 and sticky action (25\% of chance that the previous action will be executed instead of agent's action). Pong and Breakout have deterministic transition dynamics while the other 3 environments are more stochastic. In order to extract useful features from the high dimensional pixel input, we used multiple layers of convolutional neural networks followed by fully connected layers as our function approximator, which is the standard architecture in this domain \cite{mnih2015human}.

\paragraph{Classic Control and Box2D}
OpenAI gym \cite{openai} classic control provides many classical control problems. Among them, Cartpole and Acrobot have low dimensional (4, 6) continuous state space with discrete action space of cardinalities 2 and 3. Pendulum originally has a continuous action space but was modified to have 2 actions of swinging to left and right to match the other environments. 
All the classic control environments have deterministic transition dynamics. 

We also experimented with LunarLander, a Box2D environment with higher complexity and difficulty than the classic control environments. The state space is continuous with discrete actions of firing engines at different locations to control the descent of the lunar lander. The episode stops when the lunarlander's pads touch the surface and rewards are given based on fuel consumption and landing location. We used simple 3-layer MLP to be the function approximator which is sufficient for the relatively low input dimension. The environment has random starting locations and velocities while the transition dynamics are deterministic.

\paragraph{Mujoco}
The gym MuJoCo has a collection of continuous control tasks implemented based on the MuJoCo Simulator \cite{mujoco} and has been a popular testing environment for continuous-action RL algorithms. We included HalfCheetah, Hopper, and Walker2D to evaluate the performance of BVFT and other baselines.

\paragraph{Taxi}
Taxi \cite{dietterich2000hierarchical} is a classical reinforcement learning environment with a 2D grid world that simulates taxi driving along with the grids. At each time step, the taxi can choose among 6 actions of moving north, south, west, east, or stay and picks up or drops off the passenger. The RL agent will receive a reward of 20 if the taxi successfully picks up or drops of a passenger, otherwise the agent receives a reward of -1 at each time step. The original taxi environment has a grid size of 5 $\times$ 5, resulting in 500 total states (25 $\times$ 4 $\times$ 5, corresponds to 25 possible taxi locations, 4 destination locations, and 5 passenger locations(4 spawn locations and 1 in the taxi)). In order to further investigate the effect of randomness in transition dynamics, we performed additional experiments on a  modified version of the original taxi environments with additional stochasticity: we replaced the only original passenger with 4 passengers randomly spawning and disappearing at the 4 locations every time step. The resulting environment has a total of 2000 states (25 $\times$ $2^4$ $\times$ 5, corresponds to 25 possible taxi locations, $2^4$ passenger appearance status, and 5 taxi status). In addition, we added a parameter $p_{rand}$ to the environment such that at each time step, the taxi has a probability of $p_{rand}$ that it will act randomly instead of following the agent's action.

A main reason for considering this simple environment is the easy access to $\Tcal$ and $Q^\star$ due to the tabular nature of the environment. 
This allows us to introduce a number of skylines to evaluate \bvft's performance, which would not be possible in more complex environments. We use tabular Q-learning as training algorithms to generate the candidate $Q$'s. 
The results of this study can be found in Figure~\ref{taxi-topk}.

\subsection{Training algorithms and datasets}
BCQ \cite{ fujimoto2019benchmarking} and CQL \cite{kumar2020conservative} are used to learn $Q$-functions in Atari from the RLUnplugged datasets \cite{gulcehre2021rl}. (In Appendix~\ref{app:exp} we also use online DQN to generate candidate policies in Atari.) 
For the classic control and Box2D domains, we deployed DQN \cite{mnih2015human} with 2/3 layer MLP as function approximators to learn the candidate model in an online manner. The offline dataset for policy selection is generated by first training an expert policy, and then mixing 70\% of expert trajectories with 30\%  $\epsilon$-greedy trajectories w.r.t.~the expert for $\epsilon=0.5$. 
In MoJoCo, policies are learned using the continuous-action version of BCQ \cite{fujimoto2019offpolicy} using D4RL datasets \cite{fu2021d4rl}. 
In taxi, standard Q-learning is used to learn $Q^*$ online. See Table~\ref{hyper_1} for further details about the candidate hyperparameters of each training algorithm in each domain.

\begin{table}[t]
\vskip 0.2in
\begin{center}
\begin{tabular}{@{}lll@{}}
\toprule
Hyperparameter      & Atari                                                  & MuJoCo                \\ \midrule
Training size & 1M & 500K \\
Hidden size          & FC: 256, 1024; CNN: (32,64,3136)                       & 64, 1024              \\
\# hidden layers & FC: 1, 2                                               & 2, 3                  \\
Learning rate       & 0.0000625, 0.00001                                     & 0.001,  0.00001       \\
Learning steps       & \{200k:100k:1M\}/\{50k:50k:400k\}/\{50k:50k:400k\}  & \{50k:25k:300k\} \\
Algorithms          & DQN/BCQ/CQL        & BCQ/CQL            \\ \bottomrule
\end{tabular}
\vskip 0.05in
\begin{tabular}{@{}llll@{}}
\toprule
Hyperparameter      & Taxi                       & Gym Control       & Box2d                  \\ \midrule
Training size &  Online & Online & Online\\
Hidden size          & Tabular                    & 64, 256, 1024         & 64, 256, 1024          \\
\# hidden layers & Tabular                    & 2, 3                  & 2, 3                   \\
Learning rate       & \{5e-3:5e-3:2.5e-2\} & 2.5e-4, 5e-4       & 2.5e-4, 5e-4        \\
Learning steps       & \{200k:50k:500k\}     & \{50k:25k:250k\} & \{100k:50k:400k\} \\
Algorithms          & Q-learning                 & DQN                   & DQN                    \\ \bottomrule
\end{tabular}
\vskip 0.05in
\caption{Hyperparameters of the candidate models. We consider an array of hyperparameters on the optimizer and the neural architecture to produce a diverse set of candidate models for selection. The notation $\{a:x:b\}$ is a shorthand for $\{a, a+x, a+2x, \ldots, b\}$.}
\label{hyper_1}
 \end{center}
\vskip -0.2in
\end{table}

\section{Additional Experiments}
\label{app:exp}
\para{Atari with CQL and DQN as the training algorithms} Figure~\ref{fig:atari-cql} shows the results in Atari when we use CQL as the training algorithm (still using offline data), and Figure~\ref{fig:atari-dqn} shows what happens when DQN learns candidate policies using online interactions with the environment. Both results are qualitatively similar to Figure~\ref{fig:atari-topk}, where BCQ is used as the training algorithm. This demonstrates the robustness of \bvft w.r.t.~different training algorithms, or even offline vs.~online training. 

\begin{figure}
  \centering
\centerline{\includegraphics[width=\columnwidth]{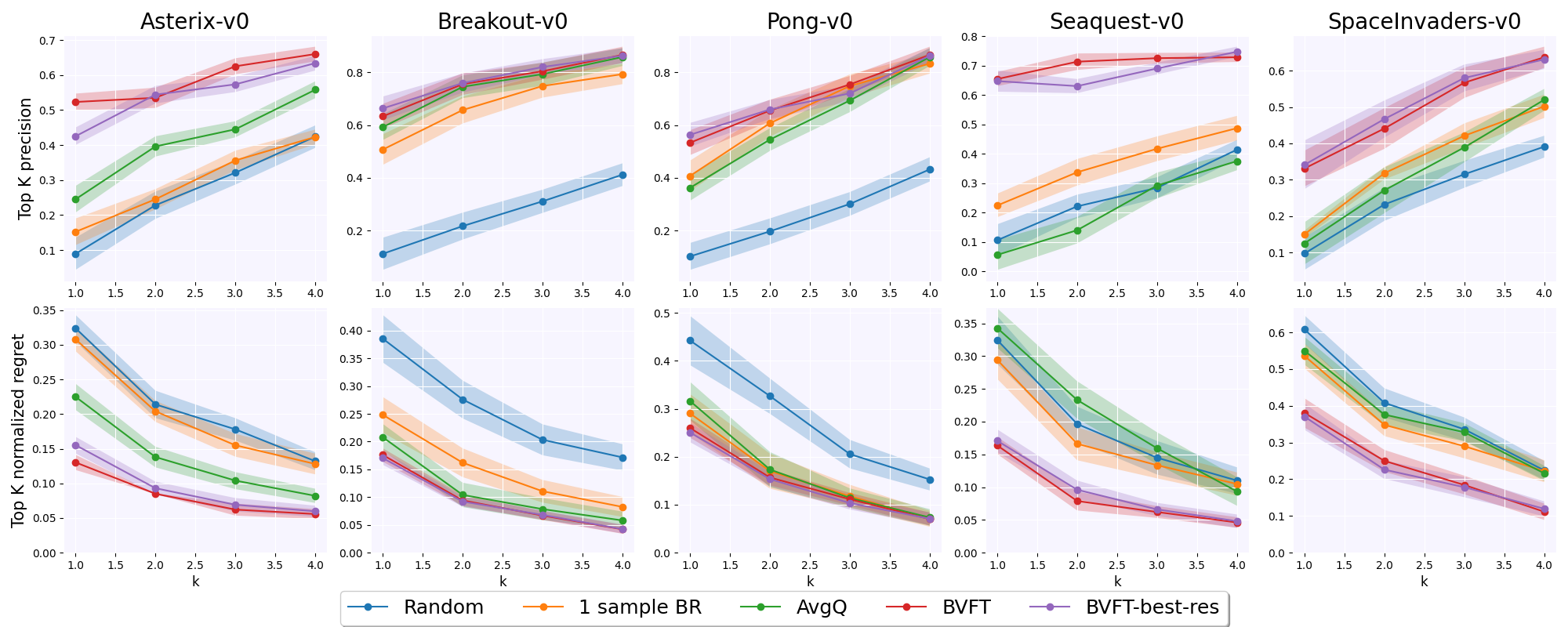}}
  \caption{Policy selection in Atari with candidate policies learned by CQL. Results are qualitatively similar to when training algorithms are BCQ (Figure~\ref{fig:atari-topk}). \label{fig:atari-cql}}
\end{figure}

\para{Mujoco with CQL as the training algorithm} Similarly, we also reproduce our results in Figure~\ref{fig:multi-ope}L with CQL as the training algorithm in Mujoco, and the results are qualitatively similar. See Figure~\ref{fig:mujoco-cql}. 
\begin{figure}
  \centering
\centerline{\includegraphics[width=0.6\columnwidth]{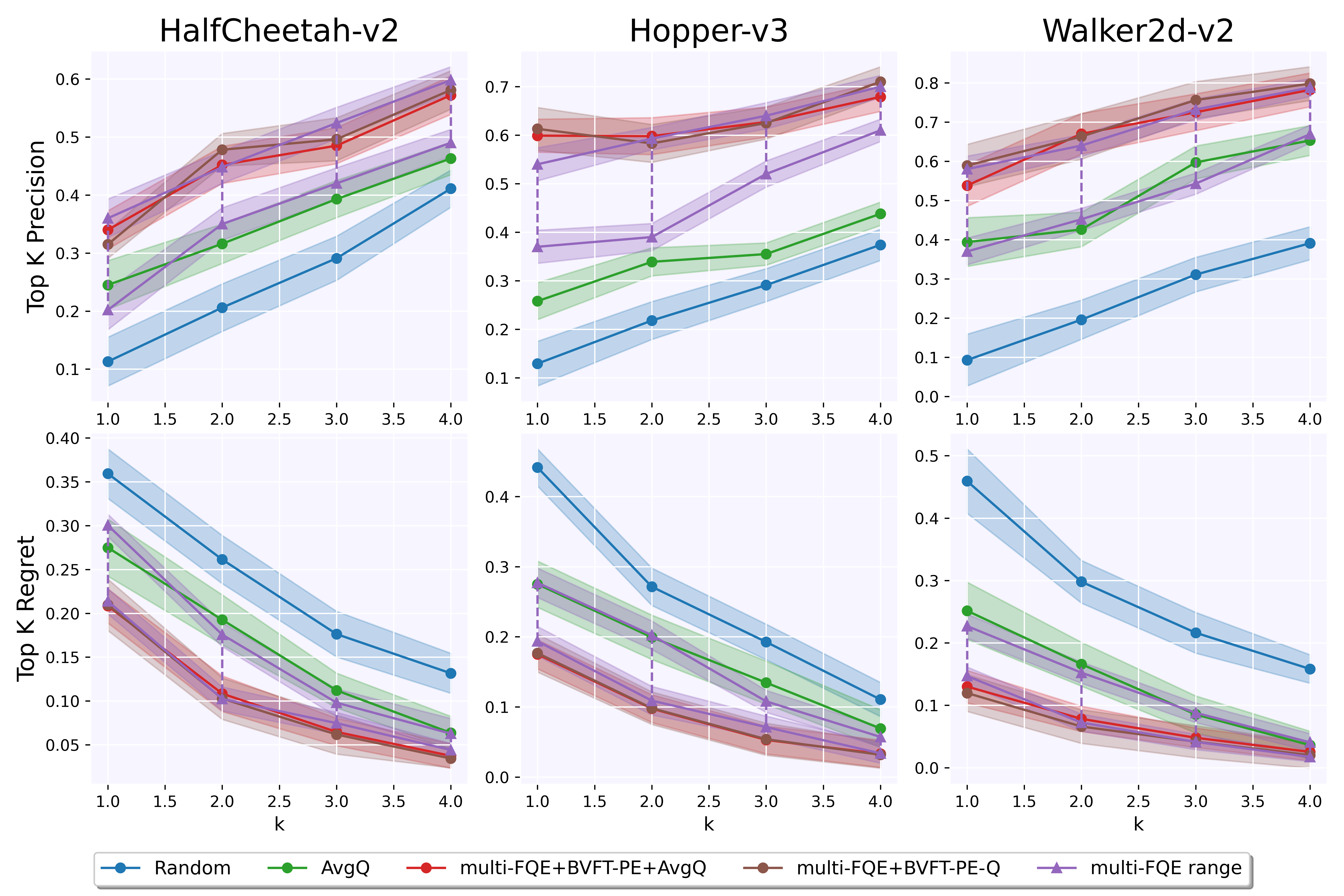}}
  \caption{Policy selection in Mujoco using \bvftpe and \bvftpeq with candidate policies learned by CQL. Results are qualitatively similar to Figure~\ref{fig:multi-ope}L). \label{fig:mujoco-cql}}
\end{figure}

\begin{figure}
  \centering
\centerline{\includegraphics[width=\columnwidth]{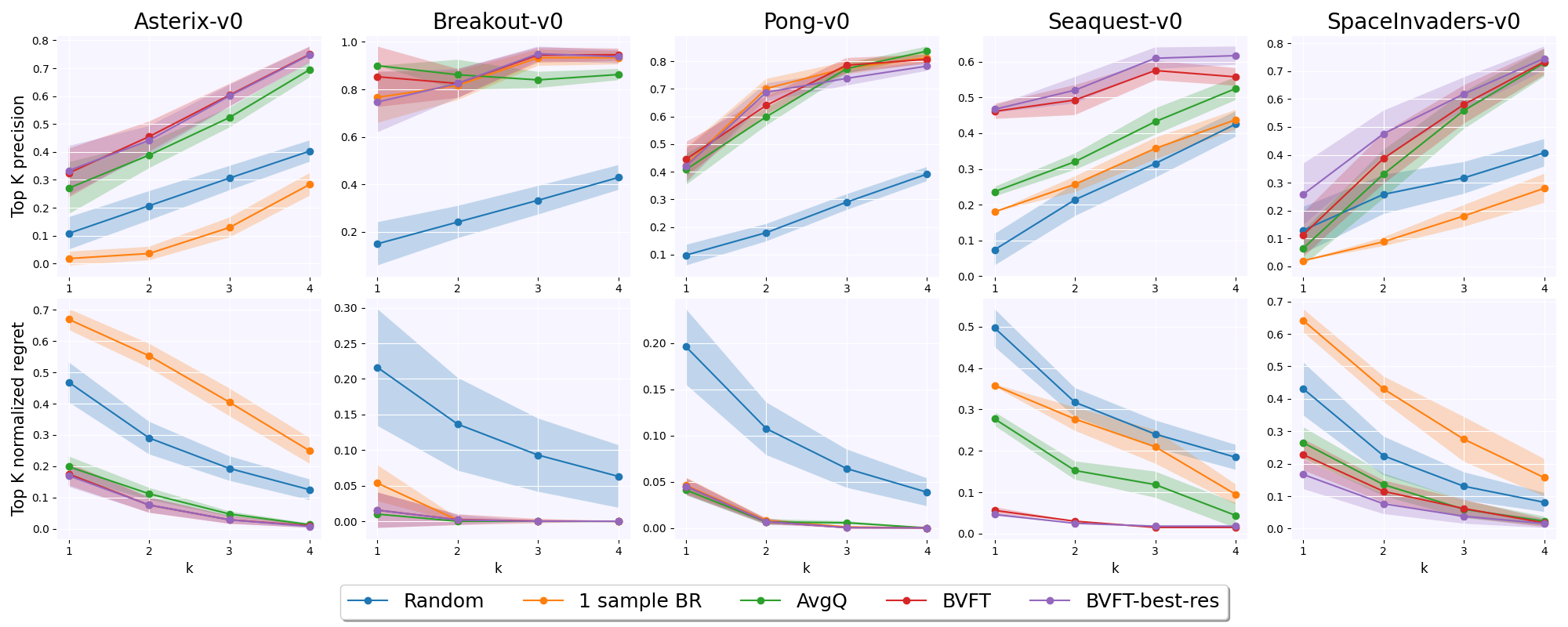}}
  \caption{Policy selection in Atari with candidate policies learned by (online) DQN. \label{fig:atari-dqn}}
\end{figure}

\para{Robustness w.r.t.~data exploratoriness} We test the sensitivity of different hyperparameter-free methods to data exploratoriness in two Atari domains. For this specific experiment, we generate our own data (unlike the other Atari experiments which use RLUnplugged datasets) by training an expert policy and mixing it with different probabilities of taking actions randomly. Figure~\ref{fig:explore} shows the robustness of \bvft across different amount of randomness in the data-generating policy. Similar robustness can be observed in Taxi (Figure~\ref{taxi-topk}, middle vs.~right). 

\para{Sample efficiency} Figure~\ref{fig:atari-top2-v-datasize} plots the performance of different methods against sample size for policy selection in Atari. Note that \bvft is competitive in the small-sample regime, and is the only method that improves when more data becomes available. 

\begin{figure}[hb]
\centering
  \centering
  \includegraphics[width=0.5\textwidth]{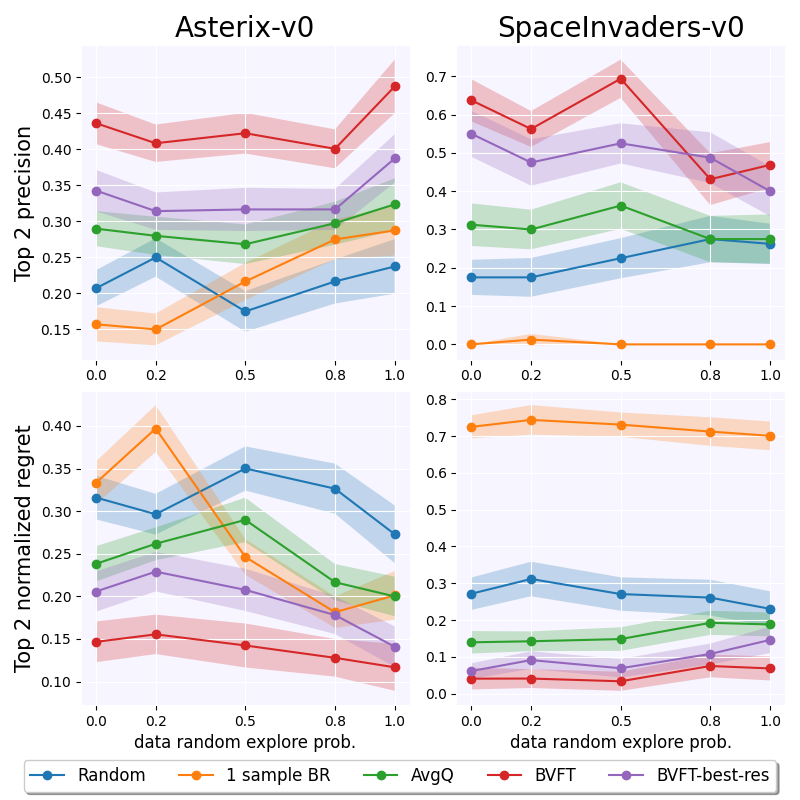}
  \label{fig:test2}
\caption{Sensitivity of policy-selection methods against data exploratoriness in 2 Atari games. X-axis is the probability of choosing a random action instead of the expert policy when generating the offline dataset for policy selection. Notice BVFT's performance is not sensitive to the level of stochasticity in the data-generation policy.\label{fig:explore} }
\end{figure}

\begin{figure}[ht]
\vskip 0.2in
\begin{center}
\centerline{\includegraphics[width=0.6\columnwidth]{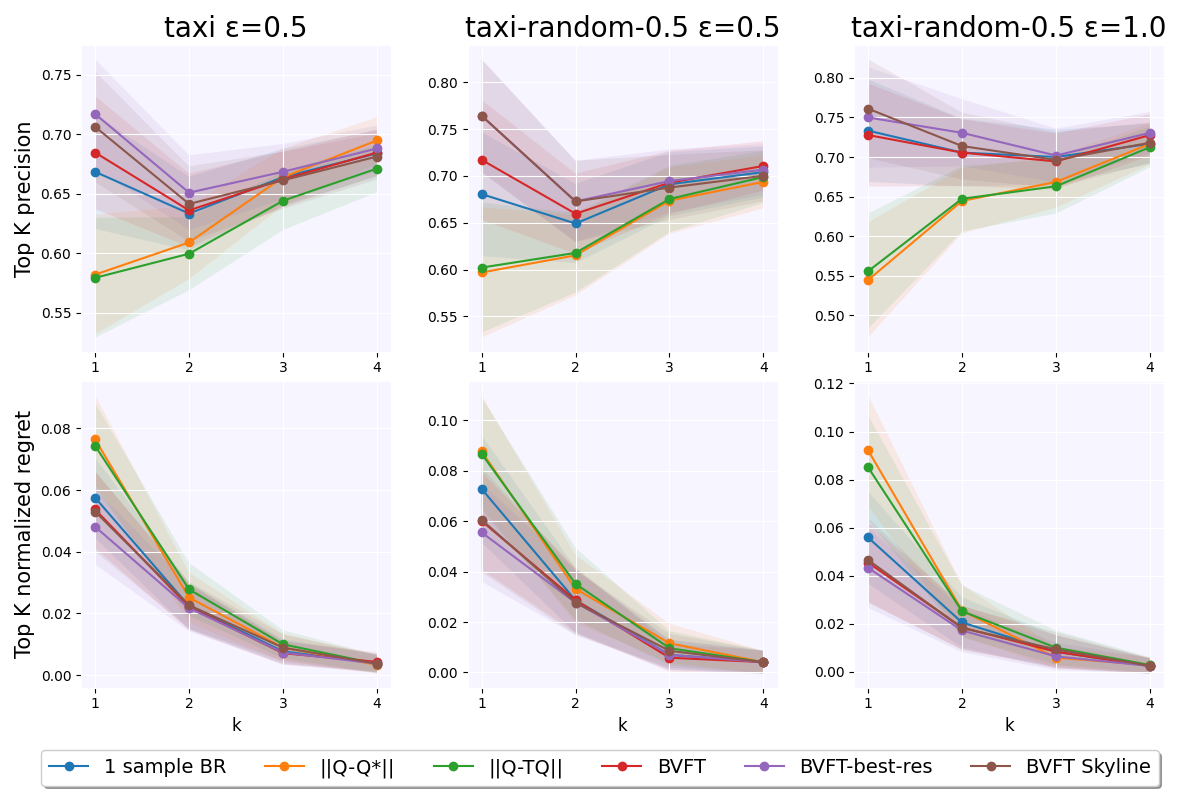}}
\caption{Policy selection in Taxi (left) and its modified version with additional stochasticity in transition dynamics (middle and right). $\epsilon$ indicates the stocahsticity in the data-generating policy.  ``\bvft Skyline'' is similar to \bvft, except that $\Gcal_{i,j}$ is produced by discretizing $\Scal\times\Acal$ according to $Q^\star$ itself instead of the compared functions. $\|Q-Q^\star\|$ and $\|Q-\Tcal Q\|$ are skylines that use the knowledge of $Q^\star$ and $\Tcal$, which are usually not available. 
\bvft competes favorably and sometimes outperforms these skylines. Baselines ``Random'' and ``AvgQ'' are not included as their performance is much worse and off the chart. Middle and right figures also show the robustness of the result w.r.t.~the data-generation policy.\label{taxi-topk} }
\end{center}
\vskip -0.2in
\end{figure}

\begin{figure}[ht]
\vskip 0.2in
\begin{center}
\centerline{\includegraphics[width=\columnwidth]{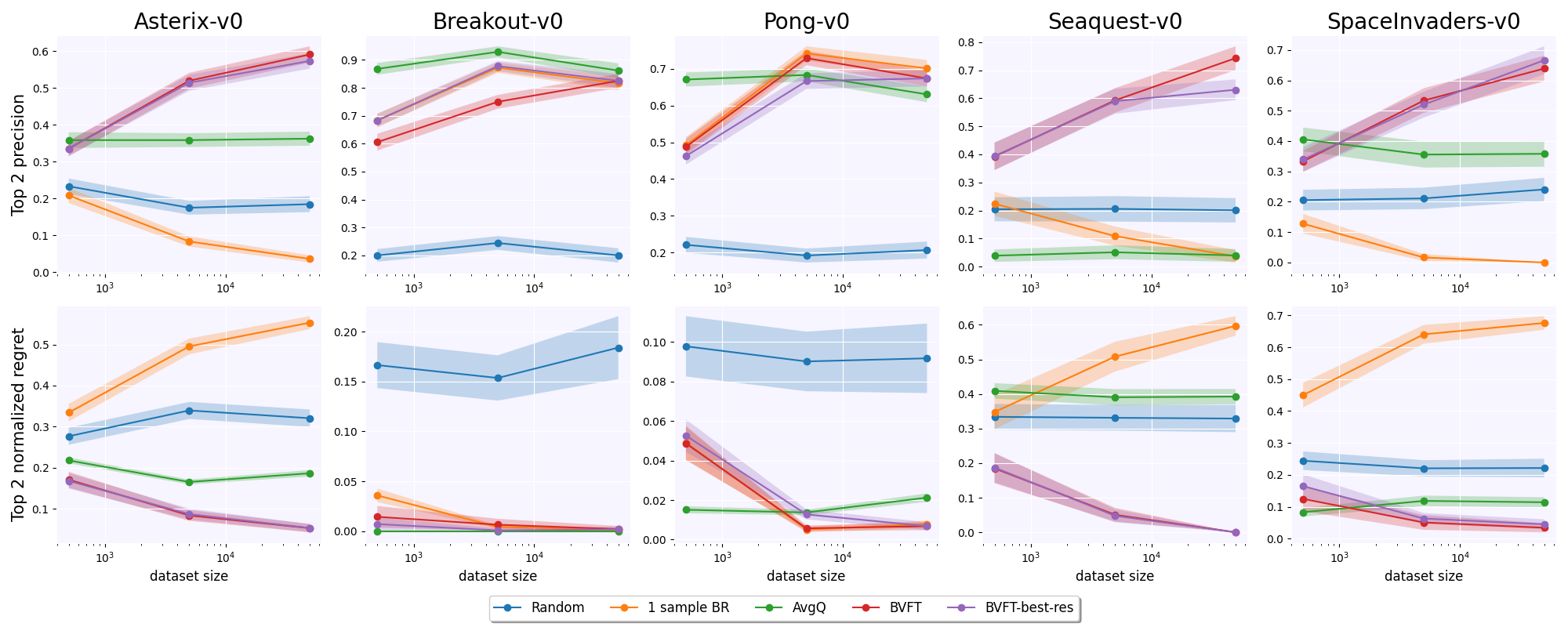}}
\caption{Top-2 precision and regret vs data size in 5 Atari environments. \bvft's performance improves as the data size increases while other baselines stays relatively constant.}
\label{fig:atari-top2-v-datasize}
\end{center}
\vskip -0.2in
\end{figure}


\end{document}